
\documentclass[nohyperref]{article}

\usepackage{microtype}
\usepackage{graphicx}
\usepackage{subfigure}
\usepackage{booktabs} 
\usepackage{algorithm}
\usepackage{algcompatible}
\usepackage{algpseudocode}
\usepackage{dsfont}

\usepackage{hyperref}


\def\*#1{\mathbf{#1}}


\usepackage[accepted]{icml2022_arxiv}

\usepackage{amsmath}
\usepackage{amssymb}
\usepackage{mathtools}
\usepackage{amsthm}
\usepackage{bm}



\usepackage{multirow}

\usepackage{float}
\floatstyle{plain}
\restylefloat{figure}

\usepackage[capitalize,noabbrev]{cleveref}

\theoremstyle{plain}
\newtheorem{theorem}{Theorem}[section]
\newtheorem{proposition}[theorem]{Proposition}
\newtheorem{lemma}[theorem]{Lemma}
\newtheorem{corollary}[theorem]{Corollary}
\theoremstyle{definition}

\newtheorem{assumption}[theorem]{Assumption}
\theoremstyle{remark}

\usepackage[textsize=tiny]{todonotes}

\def\H{\mathcal{H}}

\def\G{\mathcal{G}}

\def\X{\mathcal{X}}

\def\Y{\mathcal{Y}}

\def\L{\mathcal{L}}

\def \d1{\mathds{1}}

\def\E{\mathbb{E}}

\def\1{\mathbf{1}}
\def\P{\mathbb{P}}
\def\R{\mathbb{R}}

\DeclareMathOperator*{\argmin}{arg\,min}

\def\t{\top}

\def\argmin{\text{argmin}}
\def\max{\text{max}}



\newcommand{\sign}{\mathrm{sign}}

\newcommand{\ignore}[1]{}

\def\Pin{\mathbb{P}_{\text{in}}}
\def\Pwild{\mathbb{P}_{\text{wild}}}

\icmltitlerunning{Training OOD Detectors in their Natural Habitats}

\begin{document}

\twocolumn[
\icmltitle{Training OOD Detectors in their Natural Habitats}



\icmlsetsymbol{equal}{*}

\begin{icmlauthorlist}
\icmlauthor{Julian Katz-Samuels}{equal,yyy}
\icmlauthor{Julia Nakhleh}{equal,comp}
\icmlauthor{Robert Nowak}{xxx}
\icmlauthor{Yixuan Li}{comp}
\end{icmlauthorlist}

\icmlaffiliation{yyy}{Institute for Foundations of Data Science, University of Wisconsin, Madison}
\icmlaffiliation{comp}{Department of Computer Sciences, University of Wisconsin, Madison}
\icmlaffiliation{xxx}{Department of Electrical and Computer Engineering, University of Wisconsin, Madison}

\icmlcorrespondingauthor{Julian Katz-Samuels}{katzsamuels@wisc.edu}

\icmlkeywords{Machine Learning, ICML}

\vskip 0.3in
]



\printAffiliationsAndNotice{\icmlEqualContribution} 

\begin{abstract}
Out-of-distribution (OOD) detection is important for machine learning models deployed in the wild. Recent methods use auxiliary outlier data to regularize the model for improved OOD detection. However, these approaches make a strong distributional assumption that the auxiliary outlier data is completely separable from the in-distribution (ID) data. In this paper, we propose a novel framework that leverages wild mixture data---that naturally consists of both ID and OOD samples. Such wild data is abundant and arises freely upon deploying a machine learning classifier in their \emph{natural habitats}. Our key idea is to formulate a constrained optimization problem and to show how to tractably solve it. 
Our learning objective maximizes the OOD detection rate, subject to constraints on the classification error of ID data and on the OOD error rate of ID examples. We extensively evaluate our approach on common OOD detection tasks and demonstrate superior performance. Code is available at \url{https://github.com/jkatzsam/woods_ood}.
\end{abstract}

\section{Introduction}

Out-of-distribution (OOD) detection has become a central challenge in safely deploying machine learning models in the wild, where test-time data can naturally arise from a mixture distribution of both knowns and unknowns~\cite{openworld}. 
Concerningly, modern neural networks are shown to produce overconfident and therefore untrustworthy predictions for unknown OOD inputs~\cite{fool}. 
To mitigate the issue, recent works have explored training with an auxiliary outlier dataset, where the model is regularized to produce lower confidence~\cite{hendrycks2018deep} or higher energies~\cite{liu2020energy} on the outlier data. These methods have demonstrated encouraging OOD detection performance over the counterpart without auxiliary data. 

Despite this promise, there are two primary limitations to the existing methods. First, the auxiliary data distribution collected offline may not be a good match for the true distribution of unknown data in the wild, thus the learned model may fail to detect deployment-time OOD data. Second, collecting such data can be very labor-intensive and inflexible, and necessitates careful data cleaning to ensure the auxiliary outlier data does not overlap with the ID data.
We address these challenges by leveraging unlabeled ``in-the-wild'' data --- which can be collected almost \emph{for free} upon deploying a machine learning classifier in the open world, and has been largely overlooked for OOD learning purposes. Such data is available in abundance, does not require any human annotation, and is often a much better match to the true test time distribution than data collected offline.
While this setting naturally suits many real-world applications, it also poses unique challenges since the wild data distribution is not pure and consists of both ID data and OOD data. 
\begin{center}
\fbox{\begin{minipage}{20em}
In this paper, we propose a novel framework that enables effectively exploiting unlabeled in-the-wild data for OOD detection. Unlabeled wild data is frequently available since it is produced essentially whenever deploying an existing classifier in a real-world system. This setting can be viewed as training OOD detectors in their \emph{natural habitats}.
\end{minipage}}
\end{center}

Our learning framework revolves around building the OOD classifier using only labeled ID data from $\P_\text{in}$ and unlabeled wild data from $\P_\text{wild}$, which can be considered to be a mixture of $\P_\text{in}$ and an unknown (OOD) distribution. To deal with the lack of a ``clean" set of OOD examples, our key idea is to formulate a constrained optimization problem. In a nutshell, our learning objective aims to minimize the error of classifying data from $\P_\text{wild}$ as ID, subject to two constraints: \emph{(i)} the error of declaring an ID data from $\P_\text{in}$  as OOD must be low, and \emph{(ii)} the multi-class classification model must maintain the best-achievable accuracy (or close to it) of a baseline classifier designed without an OOD detection requirement. Even though our framework does not have access to a ``clean" OOD dataset, we show both empirically and theoretically that it can learn to accurately detect OOD examples. 
Our work is inspired by the semi-supervised novelty detection (SSND) framework~\cite{blanchard2010semi}, yet differs by simultaneously considering both classification and OOD detection tasks.

\begin{figure*}
    \centering
    \includegraphics[width=\textwidth]{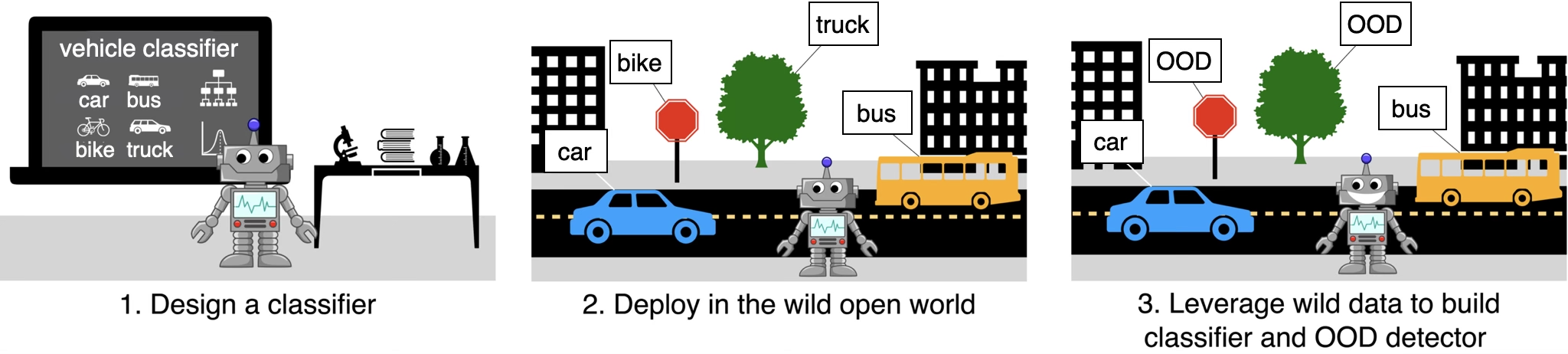}
    \caption{Overview of our learning framework. A model pre-trained to perform in-distribution (ID) classification can be deployed in the ``wild'' open world, where it will encounter large quantities of unlabeled ID \textit{and} OOD data. Using our WOODS approach, the model can be fine-tuned using the ``wild'' data to perform accurate OOD detection as well as ID classification.}
    \label{fig:woods_overview}
\end{figure*}

Beyond the mathematical framework, a key contribution of our paper is a constrained optimization implementation of the framework for deep neural networks.  We propose a novel training procedure based on the augmented Lagrangian method, or ALM~\cite{hestenes1969multiplier}. While ALM is an established approach to optimization with functional constraints, its adaptation to modern deep learning is not straightforward or common. In particular, we adapt ALM to our problem setting with inequality constraints, obtaining an end-to-end training algorithm using stochastic gradient descent. Unlike methods that add a regularization term to the training objective, our method via constrained optimization offers strong guarantees~(\emph{c.f.} Section~\ref{sec:objective}). 

We extensively evaluate our approach on common OOD detection tasks and establish state-of-the-art performance. For completeness, we compare with two groups of approaches: (1) trained with only $\P_\text{in}$ data, and (2) trained with both $\P_\text{in}$ and an auxiliary dataset. On CIFAR-100, compared to a strong baseline using only $\Pin$, our method outperforms by \textbf{48.10}\% (FPR95) on average. The performance gain precisely demonstrates the advantage of incorporating unlabeled wild data. Our method also outperforms Outlier Exposure (\texttt{OE})~\cite{hendrycks2018deep} by \textbf{7.36}\% in FPR95. Our {key contributions} are summarized as follows:
\begin{itemize}
\item We propose a novel OOD detection framework via constrained optimization (dubbed  \texttt{WOODS}, Wild OOD detection sans-Supervision), capable of exploiting unlabeled wild data. We show how to integrate constrained optimization into modern deep neural networks and solve it tractably.
\item   We provide novel theoretical insights that support \texttt{WOODS}, in particular for the choice of loss functions.
\item We perform extensive ablations and comparisons under: (1) a diverse range of datasets, (2) different mixture ratios $\pi$ of ID and OOD in $\Pwild$, and (3) different assumptions on the relationship between the wild distribution $\P_\text{wild}$ and the test-time distribution. Our method establishes \emph{state-of-the-art} results, significantly outperforming existing methods.
\end{itemize}

\section{Problem Setup}\label{sec:problem_setup}

\paragraph{Labeled In-distribution Data} 
Let $\X=\R^d$ denote the input space and $\Y=\{1,\ldots, K\}$ denote the label space. We assume access to the labeled training set $\mathcal{D}_{\text{in}}^{\text{train}} = \{(\*x_i, y_i)\}_{i=1}^n$, drawn \emph{i.i.d.} from the joint data distribution $\P_{\mathcal{X}\mathcal{Y}}$. Let $\P_{\text{in}}$ denote the marginal distribution on $\mathcal{X}$, which is also referred to as the \emph{in-distribution} (ID). Let $f_\theta: \X \mapsto \mathbb{R}^{|\mathcal{Y}|}$  denote a function for the classification task, which predicts the label of an input sample.

\paragraph{Out-of-distribution Detection} When deploying machine learning models in the real world, a reliable classifier should not only accurately classify known ID samples, but also identify as ``unknown'' any \emph{out-of-distribution} (OOD) input---samples from a different distribution $\P_\text{out}^\text{test}$ that the model has not been exposed to during training. This can be achieved through having an OOD classifier, in addition to the multi-class classifier $f_\theta$. Samples detected as OOD will be rejected; samples detected as ID will be classified by $f_\theta$.

OOD detection can be formulated as a binary classification problem. At test time, the goal is to decide whether a test-time input $\*x \in \X$ is from the in-distribution $\P_\text{in}$ (ID) or not (OOD). We denote  $g_\theta : \X \mapsto \{\text{in}, \text{out} \}$ as the function mapping for OOD detection.

\paragraph{Unlabeled in-the-wild Data} 
A major challenge in OOD detection is the lack of labeled examples of OOD. In particular, the sample space for potential OOD data can be prohibitively large, making it expensive to collect labeled OOD data. In this paper, we incorporate unlabeled in-the-wild samples $\tilde{\*x}_{1},\ldots, \tilde{\*x}_{m}$ into OOD detection. These samples consist of potentially both ID and OOD data, and can be collected almost for free upon deploying an existing classifier (say $f_\theta$) in its natural habitats. 
We use the Huber contamination model~\cite{huber1964} to model the marginal distribution of the wild data:
$$
\P_\text{wild} := (1-\pi) \P_\text{in} + \pi \P_\text{out},
$$
where $\pi \in (0,1]$. We note that the case $\pi = 0$ is technically possible in controlled deployment environments, but it is straightforward since no novelties occur and we therefore do not consider it.

\paragraph{Goal:} Our learning framework revolves around building the OOD classifier $g_\theta$ and multi-class classifier $f_\theta$ by leveraging data from both $\P_\text{in}$ and $\P_\text{wild}$. We use the shared parameters $\theta$ to indicate the fact that they may share the neural network parameters. In testing, we measure the following errors:
\begin{align*}
	& \downarrow \text{FPR}(g_\theta):=\mathbb{E}_{\*x \sim \P_\text{out}^\text{test}}(\d1{\{g_\theta(\*x)=\text{in}\}}),\\
	& \uparrow \text{TPR}(g_\theta):=\mathbb{E}_{\*x \sim \Pin}(\d1{\{g_\theta(\*x)=\text{in} \}}),\\
	&\uparrow\text{Acc}(f_\theta):=\mathbb{E}_{(\*x, y) \sim \P_{\mathcal{X}\mathcal{Y}}}(\d1{\{f_\theta(\*x)=y\}}),
	\end{align*}
where $\d1\{\cdot\}$ is the indicator function and the arrows indicate higher/lower is better. We distinguish between the OOD training distribution $\mathbb{P}_\text{out}$ and the OOD test distribution $\mathbb{P}_\text{out}^\text{test}$ because as is common in out-of-distribution detection, the data available for training may be different from the data at test time. We call the case $\P_\text{out}^\text{test} = \P_\text{out}$  the \emph{stationary wild setting}  and the case with $\P_\text{out}^\text{test} \neq \P_\text{out}$ the \emph{non-stationary wild setting} and will consider both  later in Section~\ref{experiments}.

\section{Method: Out-of-distribution Learning via Constrained Optimization}
\label{sec:method}

In this section, we present a novel framework that performs {out-of-distribution learning with unlabeled data in the wild}. Our framework offers substantial advantages over the counterpart approaches that rely only on the ID data, and naturally suits many applications where machine learning models are deployed in the open world.  

To exploit the in-the-wild data, our key idea is to formulate a constrained optimization problem (Section~\ref{sec:objective}). Moreover, we show how to integrate this constrained optimization problem into modern neural networks and solve it tractably using the Augmented Lagrangian Method (Section~\ref{sec:optimization}).

\subsection{Learning Objective}
\label{sec:objective}

In a nutshell, we formulate the learning objective by aiming to minimize the error of classifying data from $\P_\text{out}$ as ID, subject to \emph{(i)} the error of declaring an ID as OOD is at most $\alpha$, and \emph{(ii)} the multi-class classification model meets some error threshold $\tau$. Mathematically, this can be formalized as a constrained optimization problem:

\paragraph{Objective Overview} Given $\alpha, \tau \in [0,1]$, we aim to optimize:
  \begin{align}
        \inf_\theta & ~ \mathbb{E}_{\*x \sim \P_\text{out}}(\d1{\{g_\theta(\*x)=\text{in}\}}) \label{eq:goal_optimization_problem} \\
        \text{s.t. } &  \mathbb{E}_{\*x \sim \Pin}(\d1{\{g_\theta(\*x)=\text{out}\}}) \leq \alpha \nonumber \\
        & \mathbb{E}_{(\*x, y) \sim \P_{\mathcal{X}\mathcal{Y}}}(\d1{\{f_\theta(\*x)\ne y\}}) \leq \tau. \nonumber
    \end{align}
    
However, we never observe a clean dataset from $\P_\text{out}$, making it difficult to directly solve \eqref{eq:goal_optimization_problem}. To sidestep this issue, we reformulate the learning objective as follows: 
    \begin{align}
        \inf_\theta & ~ \mathbb{E}_{\*x \sim \Pwild}(\d1{\{g_\theta(\*x)=\text{in}\}}) \label{eq:infinite_sample_objective} \\
        \text{s.t. } &  \mathbb{E}_{\*x \sim \Pin}(\d1{\{g_\theta(\*x)=\text{out}\}}) \leq \alpha \nonumber \\
        & \mathbb{E}_{(\*x, y) \sim \P_{\mathcal{X}\mathcal{Y}}}(\d1{\{f_\theta(\*x)\ne y\}}) \leq \tau. \nonumber
    \end{align}
where we replaced the OOD distribution $\P_\text{out}$ in the objective with $\Pwild$, a distribution that we observe an \emph{i.i.d.} dataset from. As we explain in more detail shortly, optimizing \eqref{eq:infinite_sample_objective} is sufficient under mild conditions to solve the original optimization problem \eqref{eq:goal_optimization_problem}.

Empirically, we can solve the optimization problem \eqref{eq:infinite_sample_objective} by minimizing the number of samples $\tilde{\*x}_{1},\ldots, \tilde{\*x}_{m}$ from the wild distribution $\Pwild$ that are labeled as ID, subject to \emph{(i)} labeling at least $1-\alpha$ of the ID samples $\*x_1 \ldots, \*x_n$ correctly, and  \emph{(ii)} achieving the classification performance threshold. Equivalently, we consider solving:
\begin{align}
             \inf_{\theta} & \frac{1}{m} \sum_{i=1}^m \d1 \{ g_{\theta}(\tilde{\*x}_i) = \text{in} \} \label{eq:zero_one_loss_opt} \\
        \text{s.t. } &  \frac{1}{n} \sum_{i=1}^n \d1 \{g_{\theta}(\*x_i) = \text{out} \} \leq \alpha   \nonumber \\
        & \frac{1}{n} \sum_{i=1}^n \d1 \{ f_{\theta}(\*x_i) \neq y_i) \} \leq \tau. \nonumber
\end{align}

\paragraph{Surrogate Problem} Note that the above objective in \eqref{eq:zero_one_loss_opt} is intractable due to the $0/1$ loss and here we propose a tractable relaxation, replacing the $0/1$ loss with a surrogate loss as follows: 
\begin{align}
      \inf_\theta & \frac{1}{m} \sum_{i=1}^{m} \L_\text{ood} (g_\theta(\tilde{\*x}_i), \text{in}) \label{eq:classification_approach} \\ 
    \text{s.t.} & \frac{1}{n} \sum_{j=1}^n \L_\text{ood} (g_\theta({\*x}_j), \text{out}) \leq \alpha \nonumber \\
    &  \frac{1}{n} \sum_{j=1}^n \L_\text{cls}(f_\theta(\*x_j), y_j) \leq \tau \nonumber
\end{align}
where $\L_\text{ood}$ denotes the loss of the binary OOD classifier and $\L_\text{cls}$ denotes a loss for the classification task. Our framework is general and can be instantiated by different forms of loss functions, for which we describe details later in Section~\ref{sec:neuralnets}.

Here, we state a theoretical result justifying the optimization problem \eqref{eq:classification_approach} where we use tractable losses, specifically using the sigmoid loss $\sigma(t) = \frac{1}{1+e^{-t}}$ for $\L_\text{ood}$ and the hinge loss for $\L_\text{cls}$.
We suppose that the weights are $p$-dimensional and belong to a subset $\theta \in \Theta \subset \R^p$. Let \texttt{opt} denote the value to the population-level optimization problem of interest:
  \begin{align}
        \inf_{\theta \in \Theta} & ~ \mathbb{E}_{\*x \sim \P_\text{out}}\L_\text{ood} (g_\theta(\tilde{\*x}_i), \text{in}) \label{eq:goal_optimization_problem_tractable} \\
        \text{s.t. } &  \mathbb{E}_{\*x \sim \Pin}\L_\text{ood} (g_\theta({\*x}_j), \text{out}) \leq \alpha \nonumber \\
        & \mathbb{E}_{(\*x, y) \sim \P_{\mathcal{X}\mathcal{Y}}}\L_\text{cls}(f_\theta(\*x_j), y_j) \leq \tau. \nonumber
    \end{align}

\vspace{0.3cm}
\begin{proposition}\label{prop:sigmoid_loss}
Suppose $K=2$. Suppose $f_\theta(x) : \R^d \mapsto [0,1]$. Suppose $\L_\text{ood} (t, \text{in}) = \sigma(-t)$, $\L_\text{ood} (t, \text{out}) = \sigma(t)$, and $\L_\text{cls}(s, y)= y \log(\frac{1}{s}) + (1-y) \log(\frac{1}{1-s})$ for $s \in [0,1]$. 
Define $\epsilon_k :=\sqrt{\frac{2 \ln(6/\delta)}{k}} +2\underset{h \in \{f,g\}}{\max} \underset{\P \in \{\P_1, \P_\text{wild}\}}{\max} \underset{{\tilde{\*x}_{1},\ldots, \tilde{\*x}_{m} \sim \P}}{\E} \,\underset{{\eta_1, \ldots, \eta_m}}{\E} \underset{\theta \in \Theta}{\sup} \frac{1}{k} \sum_{i=1}^k \eta_i h_\theta(\tilde{\*x}_{i}) $ 
where $\eta_1,\ldots, \eta_k$ are \emph{i.i.d.} and $\P(\eta_i=1) = \P(\eta_i=-1)=1/2$. Let $\widehat{\theta}_\epsilon$ solve \eqref{eq:classification_approach} with some tolerance $\epsilon $  (see the Appendix for a formal statement). Then, there exist universal positive constants $c_1,c_2,c_3$ such that under a mild condition (see the appendix), with probability at least $1-\delta$
\begin{enumerate}
\item $\E_{\*x \sim \P_\text{out}} \sigma(-\widehat{g}_{\widehat{\theta}_\epsilon}(\*x)) \leq \texttt{opt} + c_1 \pi^{-1}(\epsilon_n + \epsilon_m)$,
    \item $\E_{\*x\sim \P_\text{in}}\sigma(g_{\widehat{\theta}_\epsilon}(\*x))  \leq \alpha + c_2 \epsilon_n$, and
    \item $\E_{(\*x,y)\sim \P_{\mathcal{X}\mathcal{Y}}} \L_\text{cls}(f_{\widehat{\theta}_\epsilon}(\*x), y)  \leq \tau + c_3 \epsilon_n $.
\end{enumerate}
\end{proposition}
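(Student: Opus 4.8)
The plan is to combine a standard uniform-convergence (Rademacher complexity) argument with the linearity of the Huber mixture $\Pwild = (1-\pi)\Pin + \pi\P_\text{out}$, which is exactly what produces the $\pi^{-1}$ factor in the objective bound. The central identity is that for any bounded $\phi$,
\[
\E_\text{out}[\phi] = \pi^{-1}\big(\E_\text{wild}[\phi] - (1-\pi)\,\E_\text{in}[\phi]\big),
\]
so controlling the out-distribution objective reduces to controlling a wild-distribution quantity (which we can estimate from the observed sample) together with an in-distribution quantity (which the first constraint controls).

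First I would establish the generalization guarantee. Since the sigmoid loss is bounded and Lipschitz (and, under the mild boundedness condition assumed in the appendix, so is the hinge loss on $\Theta$), McDiarmid's inequality followed by symmetrization and Talagrand's contraction lemma shows that, with probability at least $1-\delta$, simultaneously for every $\theta \in \Theta$ the three empirical averages appearing in \eqref{eq:classification_approach}---the wild objective, the $\Pin$ constraint, and the classification constraint---lie within $\epsilon_m$ (for the $m$-sample wild average) or $\epsilon_n$ (for the $n$-sample averages) of their population values. Here the $\sqrt{2\ln(6/\delta)/k}$ term is the McDiarmid deviation, the Rademacher term in $\epsilon_k$ dominates the complexity of the loss classes after contraction, and the $6/\delta$ arises from a union bound over the (two-sided) control of the three quantities.

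Results 2 and 3 then follow immediately: $\widehat{\theta}_\epsilon$ is empirically (nearly) feasible by construction, so adding one $\epsilon_n$ from uniform convergence pushes the population $\Pin$-error and classification error just past $\alpha$ and $\tau$. For result 1 I would let $\theta^\star$ attain \texttt{opt} in \eqref{eq:goal_optimization_problem_tractable}; uniform convergence makes $\theta^\star$ empirically feasible within the tolerance $\epsilon$, so optimality of $\widehat{\theta}_\epsilon$ together with two-sided uniform convergence on the wild objective gives $\E_\text{wild}[\sigma(-g_{\widehat{\theta}_\epsilon})] \le \E_\text{wild}[\sigma(-g_{\theta^\star})] + O(\epsilon_m)$. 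Applying the mixture identity to both $\widehat{\theta}_\epsilon$ and $\theta^\star$, substituting this inequality, and rewriting $\sigma(-t)=1-\sigma(t)$ so that the in-distribution terms become $\E_\text{in}[\sigma(g)]$, the two $\E_\text{out}$ expressions collapse to $\texttt{opt}$ plus $\pi^{-1}O(\epsilon_m)$ plus a cross term $\tfrac{1-\pi}{\pi}\big(\E_\text{in}[\sigma(g_{\widehat{\theta}_\epsilon})] - \E_\text{in}[\sigma(g_{\theta^\star})]\big)$.

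The main obstacle is precisely this cross term. Its leading factor $\E_\text{in}[\sigma(g_{\widehat{\theta}_\epsilon})]$ is at most $\alpha + c_2\epsilon_n$ by result 2, but because it is multiplied by $\pi^{-1}$, a naive bound $\E_\text{in}[\sigma(g_{\theta^\star})] \ge 0$ would leave an $O(\pi^{-1})$ constant rather than an $O(\pi^{-1}\epsilon_n)$ error. This is where the ``mild condition'' enters: assuming the $\Pin$ constraint is active at the optimum (i.e. $\E_\text{in}[\sigma(g_{\theta^\star})] = \alpha$, which is natural since relaxing it can only decrease the objective), the cross term is bounded by $\tfrac{1-\pi}{\pi}c_2\epsilon_n \le \pi^{-1}c_2\epsilon_n$. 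Collecting the $\pi^{-1}O(\epsilon_m)$ and $\pi^{-1}O(\epsilon_n)$ contributions into a single constant $c_1$ yields result 1. The remaining bookkeeping---splitting $\delta$ across the union bound and tracking the Lipschitz constants absorbed into $c_1,c_2,c_3$---is routine.
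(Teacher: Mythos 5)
Your proposal is correct and follows essentially the same route as the paper: a McDiarmid-plus-symmetrization-plus-contraction uniform-convergence step over the three empirical quantities (giving the $6/\delta$ union bound and the Rademacher terms), near-feasibility of $\widehat{\theta}_\epsilon$ for results 2 and 3, and for result 1 the linear mixture decomposition $R_\text{wild}=\pi R_1+(1-\pi)(1-R_{-1})$ via $\sigma(z)+\sigma(-z)=1$, with the cross term killed by the ``mild condition'' that the $\Pin$ constraint is active at the optimum. The latter is exactly the paper's Assumption~\ref{assum:theory} and Lemma~\ref{lem:ssnd}, which you have correctly identified as the crux of the $\pi^{-1}(\epsilon_n+\epsilon_m)$ rate.
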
 
The above Proposition shows that as long as the Rademacher complexities of the function classes $\{g_\theta : \theta \in \Theta\}$ and $\{f_\theta : \theta \in \Theta\}$ decay at a suitable rate, then solving \eqref{eq:classification_approach} gives a solution that approaches feasibility and optimality for \eqref{eq:goal_optimization_problem_tractable}, the optimization problem of interest. Due to space constraints, we defer a full Proposition statement with additional details and proof to the Appendix. The above Proposition extends Theorem 1 of \cite{blanchard2010semi} from the computationally intractable $0/1$ loss to the sigmoid loss, a tractable, differentiable loss that we use in our experiments. In addition, we replace the VC dimension in their Theorem with the Rademacher complexity of the function class, a much more fine-grained measure of the function class complexity. We note that it is also possible to prove an analogue of Proposition \ref{prop:sigmoid_loss} for the $0/1$ loss.

{Although deep neural networks tend to have a trivial Rademacher complexity scaling with the number of data points due to their ability to interpolate random labels \cite{zhang2021understanding}, our training approach has several regularizing features that limit the expressivity of the deep neural network. First, in our training procedure described in Section \ref{experiments}, we add a weight decay term that penalizes deep neural networks with very large weights and limits the network's expressiveness. Second, we perform the classification task and the OOD detection task at once and we conjecture that requiring our algorithm to have good classification performance regularizes the model for the OOD task. Finally, we also fine-tune from a network pre-trained on the classification task with a relatively small learning rate, limiting the optimization procedure's ability to arrive at solutions interpolating random labels and therefore adding further regularization.}

Having theoretically justified the optimization problem \eqref{eq:classification_approach}, we next show how to optimize it.

\subsection{Solving the Constrained Optimization}
\label{sec:optimization}
In this subsection, we first provide background on the Augmented Lagrangian method, and then discuss how it can solve our constrained optimization problem.

\paragraph{Augmented Lagrangian Method (ALM)} Augmented Lagrangian method~\cite{hestenes1969multiplier} is an established approach to optimization with functional constraints. ALM improves over two other related methods for constrained optimization: the penalty method and the method of Lagrangian multipliers. While the penalty method suffers from ill-conditioning \cite{nocedal2006numerical}, the method of Lagrangian multipliers is specific to the convex case \cite{rockafellar1973dual}.
In this paper, we adapt ALM to our setting with inequality constraints, and later show that it can be optimized end-to-end with modern neural networks. 

To provide some background, we consider the following constrained optimization problem as an example:
    \begin{align}
        \min_{\theta \in \R^p} & f(\theta) \label{eq:alm_desc_prob}   \\
        & \text{s.t. } c_i(\theta) \leq 0 \, \forall i \in [q] \nonumber,
    \end{align}
    where $f$ and $c_1,\ldots, c_q$ are convex. ALM solves the constrained optimization problem in \eqref{eq:alm_desc_prob} by converting it into a sequence of unconstrained optimization problems.

Define the the classical augmented Lagrangian (AL) function
\begin{align*}
    \L_\beta(\theta,\lambda) & = f(\theta) + \sum_{i=1}^q \psi_\beta(c_i(\theta),\lambda_i)
\end{align*}
where
\begin{align*}
    \psi_\beta(u,v) & = \begin{cases}
    uv + \frac{\beta}{2} u^2 & \beta u + v \geq 0 \\
    - \frac{v^2}{2 \beta} & \text{o/w}
    \end{cases},
\end{align*}
$\lambda = (\lambda_1,\ldots, \lambda_q)^\t$, and $\beta > 0$. At iteration $k$, ALM minimizes the augmented Lagrangian function with respect to $\theta$ and then performs a gradient ascent update step on $\lambda$ \cite{xu2021first,xu2021iteration}:
\begin{enumerate}
    \item $\theta^{(k+1) } \longleftarrow \argmin_\theta \L_{\beta_k}(\theta,\lambda^{(k)})$
    \item $\lambda^{(k+1)} \longleftarrow \lambda^{(k)} + \rho \nabla_{\lambda} \L_{\beta_k}(\theta^{(k+1)},\lambda),$
\end{enumerate}
where $\rho$ is a learning rate for the dual variable $\lambda$ and $\{\beta_k\}_k$ is a sequence of penalty parameters. The sequence of penalty parameters $\{\beta_k\}_k$ may be chosen beforehand or adapted based on the optimization process.

\paragraph{Our Algorithm}

\begin{algorithm}[t]
\caption{\texttt{WOODS} (Wild OOD detection sans-Supervision)}\label{alg:alm}
	\begin{algorithmic}[1]
		\State \textbf{Input:} $\theta_{(1)}^{(1)}$, $\lambda_{(1)}^{(1)}$ $\beta_1$, $\beta_2$, epoch length $T$, batch size $B$, learning rate $\mu_1$, learning rate $\mu_2$, penalty multiplier $\gamma$, \texttt{tol}
		\For{\text{epoch} $=1,2,\ldots$}
		\For{$t=1,2,\ldots,T-1$}
		\State Sample a batch of data, calculate $\L_\beta^\text{batch}(\theta, \lambda)$
		\State 
		    $\theta^{(t+1)}_{(\text{epoch})} \longleftarrow  \theta^{(t)}_{(\text{epoch})}  - \mu_1 \nabla_\theta  \L_\beta^\text{batch}(\theta, \lambda)$
		\EndFor 
		\State $\lambda^{(\text{epoch}+1)} \leftarrow\lambda^{(\text{epoch})} + \mu_2 \nabla_\theta \L_{\beta}(\theta_{(\text{epoch})}^{(T)}, \lambda^{(\text{epoch})})$ \label{line:grad_asc}
		\If{$\frac{1}{n} \sum_{i=1}^n  \L_\text{ood}( g_{\theta^{(T)}_{(\text{epoch})}}(\*x_i), \text{out})  > \alpha+\texttt{tol}$}
		\State $\beta_1 \longleftarrow \gamma \beta_1$ \label{line:in_constraint_weight_inc}
		\EndIf
		\If{$\frac{1}{n} \sum_{i=1}^n \L_\text{cls}( f_{\theta^{(T)}_{(\text{epoch})}}(\*x_i) , y_i) > \tau+\texttt{tol}$}
		\State $\beta_2 \longleftarrow \gamma \beta_2$ \label{line:class_constraint_weight_inc}
		\EndIf
		\State $\theta^{(1)}_{(\text{epoch}+1)} \longleftarrow \theta^{(T)}_{(\text{epoch})}$
		\EndFor
	\end{algorithmic}
\end{algorithm}

Algorithm \ref{alg:alm} presents our approach to using ALM to solve  \eqref{eq:classification_approach}. We define the augmented Lagrangian function as: 
\begin{align*}
    \L_\beta(\theta,\lambda) & = \frac{1}{m} \sum_{i =1}^m \L_\text{ood} (g_\theta(\tilde{\*x}_i), \text{in}) & \\
    & + \psi_{\beta_1}(\frac{1}{n} \sum_{j =1}^n\L_\text{ood} (g_\theta({\*x}_j), \text{out}) - \alpha, \lambda_{1}) \\
    & + \psi_{\beta_2}(\frac{1}{n} \sum_{j =1}^n \L_\text{cls}(f_\theta(\*x_j), y_j) - \tau, \lambda_{2}),
\end{align*}
where $\beta = (\beta_1, \beta_2)^\t$. 

\paragraph{Adaptation to Stochastic Gradient Descent} We show that our framework can be adapted to the stochastic case, which is more amenable for training with modern neural networks. We outline the full process in Algorithm~\ref{alg:alm}. In each iteration, we calculate the per-batch loss as follows:
\begin{align}
\L_\beta^\text{batch}(\theta, \lambda)& =  \frac{1}{B} \sum_{i \in I} \L_\text{ood} (g_\theta(\tilde{\*x}_i), \text{in}) \label{eq:alm_approach_obj}\\
  & + \psi_{\beta_1}(\frac{1}{B} \sum_{j \in J}\L_\text{ood} (g_\theta({\*x}_j), \text{out}) - \alpha, \lambda_{1}^{(\text{epoch})}) \nonumber \\
  & + \psi_{\beta_2}(\frac{1}{B} \sum_{j \in J}  \L_\text{cls}(f_\theta(\*x_j), y_j) - \tau, \lambda_{2}^{(\text{epoch})})], \nonumber
\end{align}
where $I$ and $J$ denote the set of mini-batch of size $B$, randomly sampled from the wild data and ID data respectively. Since $\psi(u,v)$ is convex in $u$, by Jensen's inequality, the objective function in \eqref{eq:alm_approach_obj} is an upper bound on $\L_\beta(\theta,\lambda^{(\text{epoch})})$. This step therefore approximates ALM; indeed, it is not straightforward to adapt ALM to the stochastic case~\cite{yan2020adaptive}.  At the end of the epoch, it performs a gradient ascent update on $\lambda$ (see line \ref{line:grad_asc}). Finally, in lines \ref{line:in_constraint_weight_inc} and \ref{line:class_constraint_weight_inc}, it increases the constraints weight penalties $\beta_1$ and $\beta_2$ by a penalty multiplier $\gamma > 1$.

\section{Loss Functions with Neural Networks}
\label{sec:neuralnets}

In this section, we discuss how to realize our learning framework in the context of modern neural networks. Concretely, we address how to define the loss functions  $\L_\text{cls}$ and $\L_\text{ood}$.

\paragraph{Classification Loss $\L_\text{cls}$}  
We consider a neural network parameterized by $\theta$, which encodes an input $\*x \in \mathbb{R}^d$ to a feature space with dimension $r$. We denote by $ h_\theta(\*x) \in \mathbb{R}^r$ the feature vector from the penultimate layer of the network. A weight matrix $\mathbf{W} \in \mathbb{R}^{r\times K}$ connects the feature $h_\theta(\*x)$ to the output $f_\theta(\*x) = \mathbf{W}^\top  h_\theta(\*x) \in \mathbb{R}^K$. The per-sample classification loss $\mathcal{L}_\text{cls}$ can be defined using the cross-entropy (CE) loss:
\begin{align}
    \mathcal{L}_\text{cls} (f_\theta(\*x),y) &= - \log{ \frac{e^{f_\theta^{(y)}(\*x)}}{\sum_{j=1}^K e^{f_\theta^{(j)}(\*x)}}},
    \label{eq:ce}
\end{align}
where $f_\theta^{(y)}(\*x)$ denotes the $y$-th element of $f_\theta(\*x)$ corresponding to the label $y$.


\begin{table*}[t]
\centering
\scalebox{0.64}{
\begin{tabular}{llllllllllllllll} \toprule
\multicolumn{1}{c}{\multirow{3}{*}{\textbf{Method}}} & \multicolumn{10}{c}{\textbf{OOD Dataset}} & \multicolumn{2}{c}{\multirow{2}{*}{\textbf{Average}}} & \multirow{3}{*}{\textbf{Acc.}} \\
\multicolumn{1}{c}{} &
\multicolumn{2}{c}{\textbf{SVHN}} & 
\multicolumn{2}{c}{\textbf{LSUN-R}} & 
\multicolumn{2}{c}{\textbf{LSUN-C}} &
\multicolumn{2}{c}{\textbf{Textures}} & \multicolumn{2}{c}{\textbf{Places365}} & \multicolumn{2}{c}{} & \\
\multicolumn{1}{c}{} & {FPR$\downarrow$} & {AUROC$\uparrow$} & {FPR$\downarrow$} & {AUROC$\uparrow$} & {FPR$\downarrow$} & {AUROC$\uparrow$} & {FPR$\downarrow$} & {AUROC$\uparrow$} & {FPR$\downarrow$} & {AUROC$\uparrow$} & {FPR$\downarrow$} & {AUROC$\uparrow$} & \\ \midrule &\multicolumn{13}{c}{\textbf{With $\Pin$ only}}          \\
MSP & 84.59 & 71.44 & 82.42 & 75.38 & 66.54 & 83.79 & 83.29 & 73.34 & 82.84 & 73.78 & 79.94 & 75.55 & \textbf{75.96} \\
ODIN & 84.66 & 67.26 & 71.96 & 81.82 & 55.55 & 87.73 & 79.27 & 73.45 & 87.88 & 71.63 & 75.86 & 76.38 & \textbf{75.96}\\
Energy & 85.82 & 73.99 & 79.47 & 79.23 & 35.32 & 93.53 & 79.41 & 76.28 & 80.56 & 75.44 & 72.12 & 79.69 & \textbf{75.96} \\
Mahalanobis & 57.52 & 86.01 & 21.23 & 96.00 & 91.18 & 69.69 & 39.39 & 90.57 & 88.83 & 67.87 & 59.63 & 82.03 & \textbf{75.96}\\ 
GODIN & 83.38 & 84.05 & 62.24 & 88.22 & 72.86 & 83.84 & 83.83 & 78.91  & 80.56 & 76.14 & 76.57 & 82.23 & 75.33\\
CSI & 64.70 & 84.97 & 91.55 & 63.42 & 38.10 & 92.52 & 74.70 & 92.66 & 82.25 & 73.63 & 70.26 & 81.44 & 69.90\\
\midrule &\multicolumn{13}{c}{\textbf{With $\Pin$ and $\Pwild$}} \\
OE                       &           1.57$^{\pm 0.1}$ &           99.63$^{\pm 0.0}$ &           0.93$^{\pm 0.2}$ &           99.79$^{\pm 0.0}$ &           3.83$^{\pm 0.4}$ &           99.26$^{\pm 0.1}$ &           27.89$^{\pm 0.5}$ &           93.35$^{\pm 0.2}$ &           60.24$^{\pm 0.6}$ &           83.43$^{\pm 0.6}$ &
18.89$^{\pm 0.4}$ & 95.09$^{\pm 0.2}$ &           71.65$^{\pm 0.4}$ \\
Energy (w/ OE)           &           1.47$^{\pm 0.3}$ &           99.68$^{\pm 0.0}$ &           2.68$^{\pm 1.9}$ &           99.50$^{\pm 0.3}$ &           2.52$^{\pm 0.4}$ &           99.44$^{\pm 0.1}$ &          37.26$^{\pm 9.1}$ &           91.26$^{\pm 2.5}$ &           54.67$^{\pm 1.0}$ &           86.09$^{\pm 0.4}$ & 19.72$^{\pm 2.5}$ & 95.19$^{\pm 0.7}$ &           73.46$^{\pm 0.8}$ \\
WOODS (ours)             &           0.52$^{\pm 0.1}$ &           99.88$^{\pm 0.0}$ &           0.38$^{\pm 0.1}$ &           99.92$^{\pm 0.0}$ &           0.93$^{\pm 0.2}$ &           99.77$^{\pm 0.0}$ &          17.92$^{\pm 0.5}$ &           96.44$^{\pm 0.2}$ &           37.90$^{\pm 0.6}$ &  \textbf{91.22}$^{\pm 0.3}$ & 11.53$^{\pm 0.3}$ & 97.45$^{\pm 0.1}$ &           74.79$^{\pm 0.2}$ \\
WOODS-alt (ours) &  \textbf{0.12}$^{\pm 0.0}$ &  \textbf{99.96}$^{\pm 0.0}$ &  \textbf{0.07}$^{\pm 0.1}$ &  \textbf{99.96}$^{\pm 0.0}$ &  \textbf{0.11}$^{\pm 0.0}$ &  \textbf{99.96}$^{\pm 0.0}$ &  \textbf{9.12}$^{\pm 0.3}$ &  \textbf{96.65}$^{\pm 0.1}$ &  \textbf{29.58}$^{\pm 0.4}$ &           90.60$^{\pm 0.3}$ & \textbf{7.80}$^{\pm 0.5}$  & \textbf{97.43}$^{\pm 0.5}$ &  \textbf{75.22}$^{\pm 0.2}$ \\
 \bottomrule
\end{tabular}
}
\caption{\small \textbf{Main results when $\P_\text{out}^\text{test} = \P_\text{out}$.} Comparison with competitive OOD detection methods on \texttt{CIFAR-100}. For methods using $\Pwild$, we train under the same dataset and same $\pi=0.1$. For each dataset, we create corresponding wild mixture distribution $\P_\text{wild} := (1-\pi) \P_\text{in} + \pi \P_\text{out}$ for training and test on the corresponding OOD dataset. $\uparrow$ indicates larger values are better and vice versa. $\pm x$ denotes the standard error, rounded to the first decimal point.} 
\label{tab:cifar100_main}
\end{table*}

\paragraph{Binary Loss $\L_\text{ood}$} The loss function $\L_\text{ood}$ should ideally optimize for the separability between the ID vs. OOD data under some function that captures the data density. However, directly estimating $\log p(\*x)$ can be computationally intractable as it requires sampling from the entire space $\mathcal{X}$. We note that the log partition function $E_\theta(\*x) := -\log \sum_{j=1}^K e^{f_{\theta}^{(j)}(\*x)}$ is proportional to  $-\log p(\*x)$ with some unknown factor, which can be seen from the following:
$$
p(y | \*x)  = \frac{p(\*x,y)}{p(\*x)} = \frac{e^{f_\theta^{(y)}(\*x)}}{\sum_{j=1}^K e^{f_\theta^{(j)}(\*x)}}.
$$
The negative log partition function is also known as the {free energy}, which was shown to be an effective uncertainty measurement for OOD detection~\cite{liu2020energy}. 
    
Our idea is to explicitly optimize for a level-set estimation based on the energy function (threshold  at 0), where the ID data has negative energy values and vice versa. 
\begin{align*}
            \argmin_{\theta} & \frac{1}{m}\sum_{i=1}^{m} \d1\{ E_\theta(\tilde{\*x}_i) \leq 0\}  \\
         \text{s.t. } & \frac{1}{n}\sum_{j=1}^n \d1\{ E_\theta(\*x_j) \geq 0\} \leq \alpha \\
         &  \frac{1}{n} \sum_{j=1}^n \L_\text{cls}(f_\theta(\*x_j), y_j) \leq \tau
\end{align*}
Since the $0/1$ loss is intractable,  we replace it with the binary sigmoid loss, a smooth approximation of the $0/1$ loss, yielding the following optimization problem:
\begin{align}
 \argmin_{\theta, w \in \R} & \frac{1}{m}\sum_{i=1}^{m} \frac{1}{1 + \exp(-w \cdot E_\theta(\tilde{\*x}_i))} \label{eq:energy_robust} \\
         \text{s.t. } & \frac{1}{n}\sum_{j=1}^n \frac{1}{1+\exp(w \cdot E_\theta(\*x_i))}   \leq \alpha \nonumber \\
        & \frac{1}{n} \sum_{j=1}^n \L_\text{cls}( f_{\theta}(\*x_j) , y_j) \leq \tau. \nonumber
\end{align}
Here $w$ is a learnable parameter modulating the slope of the sigmoid function. Now we may apply the same approach as in section \ref{sec:method} to solve the constrained optimization problem \eqref{eq:energy_robust}. In effect, we have $$\L_\text{ood}(g_\theta(\tilde{\*x}_i), \text{in}) = \frac{1}{1 + \exp(-w \cdot E_\theta(\tilde{\*x}_i))}.$$

This loss function is originally developed in~\cite{du2022vos} for model regularization. Our loss has two notable advancements over energy-regularized learning (ERL)~\cite{liu2020energy}: (1) We consider a more general unsupervised setting where the wild data distribution $\Pwild$ contains both ID and OOD data, whereas ERL assumes having access to an auxiliary outlier dataset that is completely separable from the ID data. Methods like \texttt{OE}~\cite{hendrycks2018deep} in general require performing manual data collection and cleaning, which is more restrictive. (2) We formalize the learning objective as a constrained optimization, which offers strong guarantees (see Section~\ref{sec:objective}).

\section{Experiments}
\label{experiments}

\subsection{Experimental Setup}
\paragraph{Datasets} 
Following the common benchmarks in literature, we use \texttt{CIFAR-10} and \texttt{CIFAR-100} \cite{krizhevsky_learning_2009} as ID datasets ($\Pin$). For OOD test datasets $\P_\text{out}^\text{test}$, we use a suite of natural image datasets including \texttt{SVHN} \cite{netzer_reading_2011}, \texttt{Textures} \cite{cimpoi_describing_2014}, \texttt{Places365} \cite{zhou_places_2018}, \texttt{LSUN-Crop} \cite{yu_lsun_2016}, and \texttt{LSUN-Resize} \cite{yu_lsun_2016}.

To simulate the wild data $\Pwild$, we mix a subset of ID data (as $\P_\text{in}$) with the outlier dataset (as $\P_\text{out}$) under various $\pi \in \{0.05, 0.1, 0.2, 0.5, 1.0\}$. We consider the stationary setting where $\P_\text{out}^\text{test} = \P_\text{out}$, and the non-stationary setting where $\P_\text{out}^\text{test} \neq \P_\text{out}$. When $\P_\text{out}^\text{test} = \P_\text{out}$ (say from \texttt{SVHN}), we train the model using \texttt{CIFAR+SVHN} as the wild data and test on \texttt{SVHN} as OOD. When $\P_\text{out}^\text{test} \neq \P_\text{out}$, for $\P_\text{out}$, we use the publicly available \texttt{300K Random Images}~\cite{hendrycks2018deep}, a subset of the original 80 Million Tiny Images dataset, and test on the 5 OOD datasets mentioned above. Note that we split \texttt{CIFAR} datasets into two halves: 25,000 images as ID training data, and remainder 25,000 used to create the wild mixture data.

\paragraph{Evaluation Metrics} To evaluate the methods, we use the standard measures in the literature: the false positive rate of declaring OOD examples as ID when $95\%$ of ID data points are declared as ID (FPR95) and the area under the receiver operating characteristic curve (AUROC). 

\begin{table*}[t]
\centering
\scalebox{0.64}{
\begin{tabular}{llllllllllllllll} \toprule
\multicolumn{1}{c}{\multirow{3}{*}{\textbf{Method}}} & \multicolumn{12}{c}{\textbf{OOD Dataset}} &  \multirow{3}{*}{\textbf{Acc.}} \\
\multicolumn{1}{c}{} &
\multicolumn{2}{c}{\textbf{SVHN}} & 
\multicolumn{2}{c}{\textbf{LSUN-R}} & 
\multicolumn{2}{c}{\textbf{LSUN-C}} &
\multicolumn{2}{c}{\textbf{Textures}} & \multicolumn{2}{c}{\textbf{Places365}} & \multicolumn{2}{c}{\textbf{300K Rand. Img.}} & \\
\multicolumn{1}{c}{} & {FPR$\downarrow$} & {AUROC$\uparrow$} & {FPR$\downarrow$} & {AUROC$\uparrow$} & {FPR$\downarrow$} & {AUROC$\uparrow$} & {FPR$\downarrow$} & {AUROC$\uparrow$} & {FPR$\downarrow$} & {AUROC$\uparrow$} & {FPR$\downarrow$} & {AUROC$\uparrow$} & \\ \midrule &\multicolumn{13}{c}{$\pi=0.05$}          \\
OE             &           80.21$^{\pm 1.7}$ &           77.47$^{\pm 1.8}$ &           77.97$^{\pm 2.3}$ &           78.68$^{\pm 1.7}$ &           61.27$^{\pm 1.4}$ &           86.27$^{\pm 0.4}$ &           77.15$^{\pm 1.2}$ &           77.94$^{\pm 0.5}$ &           80.24$^{\pm 0.3}$ &           74.86$^{\pm 0.2}$ &            75.33$^{\pm 0.3}$ &           77.16$^{\pm 0.3}$ &           73.63$^{\pm 0.3}$ \\
Energy (w/ OE) &           77.47$^{\pm 2.0}$ &           80.48$^{\pm 1.2}$ &           70.83$^{\pm 3.1}$ &           82.86$^{\pm 2.0}$ &           29.42$^{\pm 4.3}$ &           94.61$^{\pm 0.8}$ &           72.05$^{\pm 0.8}$ &           80.73$^{\pm 0.5}$ &           74.69$^{\pm 0.6}$ &           78.60$^{\pm 0.4}$ &            66.91$^{\pm 0.7}$ &           80.44$^{\pm 0.5}$ &           75.77$^{\pm 0.1}$ \\
WOODS (ours)   &  \textbf{74.54}$^{\pm 1.7}$ &  \textbf{82.01}$^{\pm 1.3}$ &  \textbf{66.29}$^{\pm 3.9}$ &  \textbf{84.46}$^{\pm 2.3}$ &  \textbf{19.07}$^{\pm 1.6}$ &  \textbf{96.48}$^{\pm 0.3}$ &  \textbf{65.75}$^{\pm 0.6}$ &  \textbf{83.71}$^{\pm 0.2}$ &  \textbf{69.97}$^{\pm 1.1}$ &  \textbf{80.82}$^{\pm 0.5}$ &   \textbf{62.48}$^{\pm 1.1}$ &  \textbf{82.92}$^{\pm 0.5}$ &  \textbf{75.92}$^{\pm 0.1}$ \\
\midrule &\multicolumn{13}{c}{$\pi=0.1$}          \\
OE             &           79.56$^{\pm 1.6}$ &           77.00$^{\pm 1.2}$ &           76.86$^{\pm 2.1}$ &           78.75$^{\pm 1.2}$ &           58.53$^{\pm 2.8}$ &           86.92$^{\pm 0.8}$ &           74.63$^{\pm 1.2}$ &           79.13$^{\pm 0.5}$ &           78.52$^{\pm 0.3}$ &           75.68$^{\pm 0.1}$ &            72.18$^{\pm 0.2}$ &           78.48$^{\pm 0.3}$ &           73.53$^{\pm 0.4}$ \\
Energy (w/ OE) &           77.45$^{\pm 2.1}$ &           80.94$^{\pm 1.4}$ &           67.13$^{\pm 3.6}$ &           83.68$^{\pm 2.4}$ &           27.08$^{\pm 2.1}$ &           94.97$^{\pm 0.4}$ &           70.15$^{\pm 1.0}$ &           81.59$^{\pm 0.6}$ &           71.71$^{\pm 1.1}$ &           79.89$^{\pm 0.6}$ &            64.24$^{\pm 2.3}$ &           82.28$^{\pm 1.1}$ &           75.27$^{\pm 0.2}$ \\
WOODS (ours)   &  \textbf{71.67}$^{\pm 1.9}$ &  \textbf{84.11}$^{\pm 1.4}$ &  \textbf{59.27}$^{\pm 3.9}$ &  \textbf{86.80}$^{\pm 1.9}$ &  \textbf{15.03}$^{\pm 1.4}$ &  \textbf{97.24}$^{\pm 0.3}$ &  \textbf{61.38}$^{\pm 0.7}$ &  \textbf{85.57}$^{\pm 0.2}$ &  \textbf{64.19}$^{\pm 1.0}$ &  \textbf{83.12}$^{\pm 0.5}$ &   \textbf{55.51}$^{\pm 1.3}$ &  \textbf{85.72}$^{\pm 0.4}$ &  \textbf{75.64}$^{\pm 0.3}$ \\
\midrule &\multicolumn{13}{c}{$\pi=0.2$}          \\
OE             &           72.59$^{\pm 3.9}$ &           81.38$^{\pm 1.9}$ &           65.04$^{\pm 3.8}$ &           82.65$^{\pm 1.8}$ &           48.62$^{\pm 3.1}$ &           89.52$^{\pm 0.8}$ &           65.95$^{\pm 1.2}$ &           82.43$^{\pm 0.3}$ &           71.29$^{\pm 0.7}$ &           78.71$^{\pm 0.4}$ &            65.40$^{\pm 0.8}$ &           81.99$^{\pm 0.1}$ &           72.89$^{\pm 0.3}$ \\
Energy (w/ OE) &           72.76$^{\pm 2.5}$ &           83.48$^{\pm 1.2}$ &           62.53$^{\pm 5.7}$ &           84.46$^{\pm 2.8}$ &           22.49$^{\pm 1.2}$ &           95.84$^{\pm 0.2}$ &           64.93$^{\pm 0.5}$ &           83.87$^{\pm 0.4}$ &           64.62$^{\pm 0.2}$ &           82.72$^{\pm 0.2}$ &            56.07$^{\pm 1.2}$ &           85.50$^{\pm 0.4}$ &           75.00$^{\pm 0.3}$ \\
WOODS (ours)   &  \textbf{71.61}$^{\pm 2.3}$ &  \textbf{84.99}$^{\pm 1.2}$ &  \textbf{51.66}$^{\pm 2.8}$ &  \textbf{89.68}$^{\pm 1.2}$ &  \textbf{12.63}$^{\pm 0.6}$ &  \textbf{97.67}$^{\pm 0.1}$ &  \textbf{59.77}$^{\pm 0.5}$ &  \textbf{86.74}$^{\pm 0.1}$ &  \textbf{58.29}$^{\pm 0.4}$ &  \textbf{85.22}$^{\pm 0.1}$ &   \textbf{49.87}$^{\pm 1.8}$ &  \textbf{88.25}$^{\pm 0.2}$ & \textbf{75.26}$^{\pm 0.2}$ \\
\midrule &\multicolumn{13}{c}{$\pi=0.5$}          \\
OE             &  \textbf{68.80}$^{\pm 2.8}$ &           82.89$^{\pm 1.1}$ &           47.64$^{\pm 4.7}$ &           88.84$^{\pm 1.8}$ &           30.86$^{\pm 1.9}$ &           93.91$^{\pm 0.4}$ &  \textbf{56.18}$^{\pm 1.6}$ &           86.11$^{\pm 0.4}$ &           62.24$^{\pm 0.5}$ &           82.53$^{\pm 0.3}$ &            53.70$^{\pm 1.6}$ &           86.58$^{\pm 0.2}$ &            73.00$^{\pm 0.3}$ \\
Energy (w/ OE) &           69.81$^{\pm 2.4}$ &           85.59$^{\pm 1.0}$ &           56.11$^{\pm 3.1}$ &           87.41$^{\pm 1.5}$ &           16.23$^{\pm 0.6}$ &           97.02$^{\pm 0.1}$ &           58.41$^{\pm 0.9}$ &           86.70$^{\pm 0.1}$ &           58.31$^{\pm 0.5}$ &           85.36$^{\pm 0.4}$ &            48.12$^{\pm 1.3}$ &           88.76$^{\pm 0.3}$ &           74.87$^{\pm 0.4}$ \\
WOODS (ours)   &           69.41$^{\pm 2.7}$ &  \textbf{86.76}$^{\pm 0.8}$ &  \textbf{44.60}$^{\pm 2.6}$ &  \textbf{91.72}$^{\pm 0.7}$ &  \textbf{12.70}$^{\pm 0.4}$ &  \textbf{97.71}$^{\pm 0.1}$ &           57.60$^{\pm 0.6}$ &  \textbf{87.74}$^{\pm 0.1}$ &  \textbf{55.03}$^{\pm 0.3}$ &  \textbf{86.82}$^{\pm 0.1}$ &   \textbf{45.00}$^{\pm 0.7}$ &  \textbf{89.85}$^{\pm 0.2}$ &  \textbf{75.72}$^{\pm 0.0}$ \\
\midrule &\multicolumn{13}{c}{$\pi=1.0$}          \\
OE             &  \textbf{46.45}$^{\pm 2.7}$ &  \textbf{91.82}$^{\pm 0.5}$ &           51.26$^{\pm 3.6}$ &           88.47$^{\pm 1.2}$ &           20.08$^{\pm 0.7}$ &           96.42$^{\pm 0.1}$ &            \textbf{51.31}$^{\pm 0.8}$ &           88.81$^{\pm 0.2}$ &           55.66$^{\pm 0.4}$ &           87.28$^{\pm 0.1}$ &   44.29$^{\pm 0.6}$ &           90.44$^{\pm 0.1}$ &            74.99$^{\pm 0.1}$ \\
Energy (w/ OE) &           56.40$^{\pm 4.0}$ &           89.48$^{\pm 1.2}$ &           54.41$^{\pm 2.5}$ &           88.77$^{\pm 0.8}$ &           17.14$^{\pm 0.9}$ &           96.91$^{\pm 0.1}$ &           52.36$^{\pm 1.3}$ &  \textbf{89.38}$^{\pm 0.3}$ &  \textbf{54.11}$^{\pm 0.9}$ &  \textbf{88.35}$^{\pm 0.2}$ &            \textbf{43.42}$^{\pm 1.0}$ &  \textbf{90.88}$^{\pm 0.1}$ &            74.85$^{\pm 0.2}$ \\
WOODS (ours)   &           62.13$^{\pm 4.4}$ &           88.89$^{\pm 1.4}$ &  \textbf{45.87}$^{\pm 1.1}$ &  \textbf{91.64}$^{\pm 0.2}$ &  \textbf{13.48}$^{\pm 1.1}$ &  \textbf{97.58}$^{\pm 0.2}$ &  56.83$^{\pm 0.7}$ &           88.19$^{\pm 0.3}$ &           54.57$^{\pm 0.3}$ &           87.43$^{\pm 0.3}$ &            45.61$^{\pm 3.0}$ &           89.78$^{\pm 1.0}$ &  \textbf{75.60}$^{\pm 0.2}$ \\
\bottomrule
\end{tabular}
}
\caption{\small \textbf{Effect of $\pi$.} ID dataset is \texttt{CIFAR-100}, and the auxiliary outlier training data is \texttt{300K Random Images}.  $\uparrow$ indicates larger values are better and vice versa. $\pm x$ denotes the standard error, rounded to the first decimal point.} 
\label{tab:ood_cifar100}
\end{table*}

\begin{figure*}
    \centering
    \includegraphics[width=0.33\textwidth]{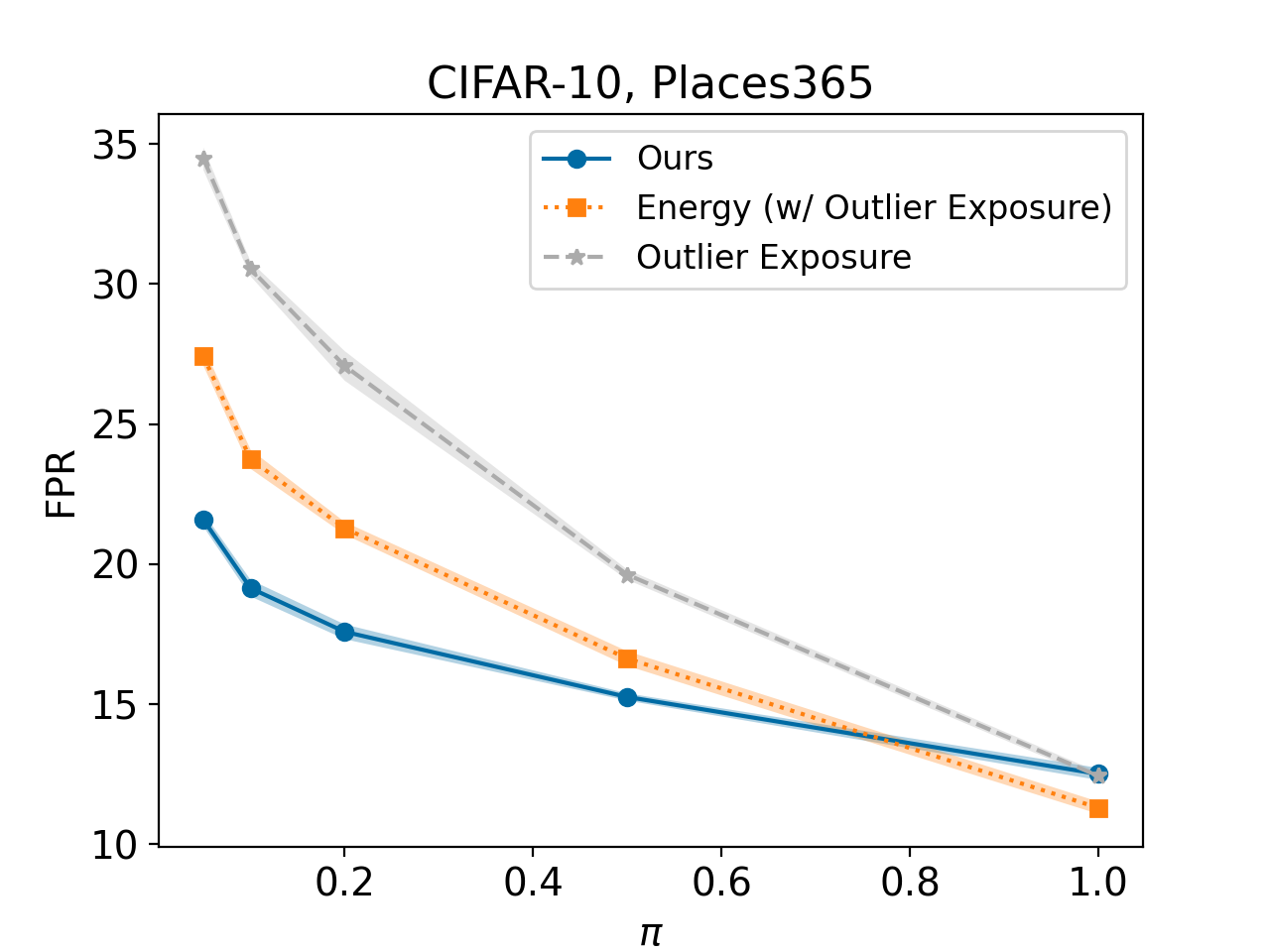}
    \includegraphics[width=0.33\textwidth]{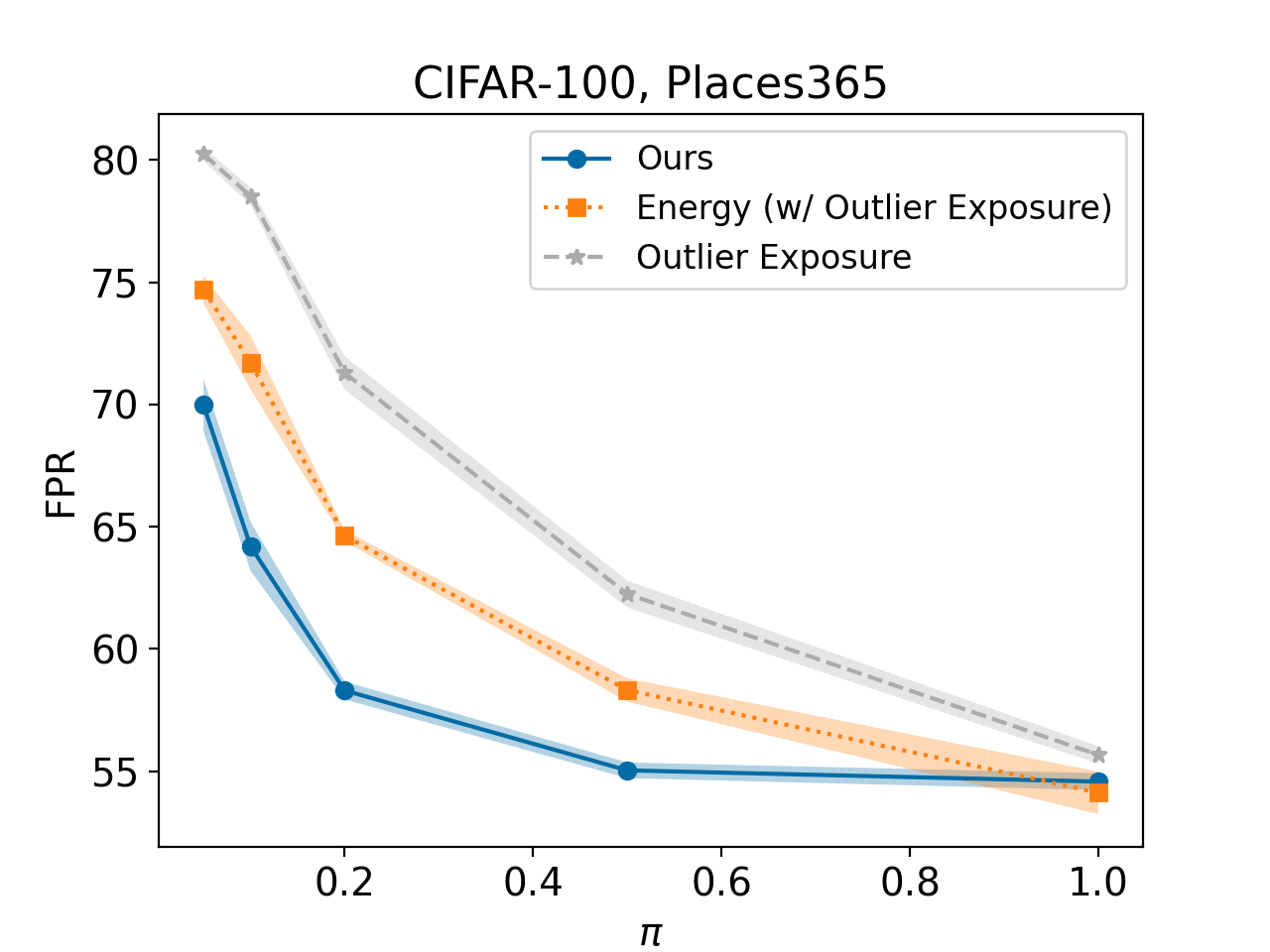}
       \includegraphics[width=0.33\textwidth]{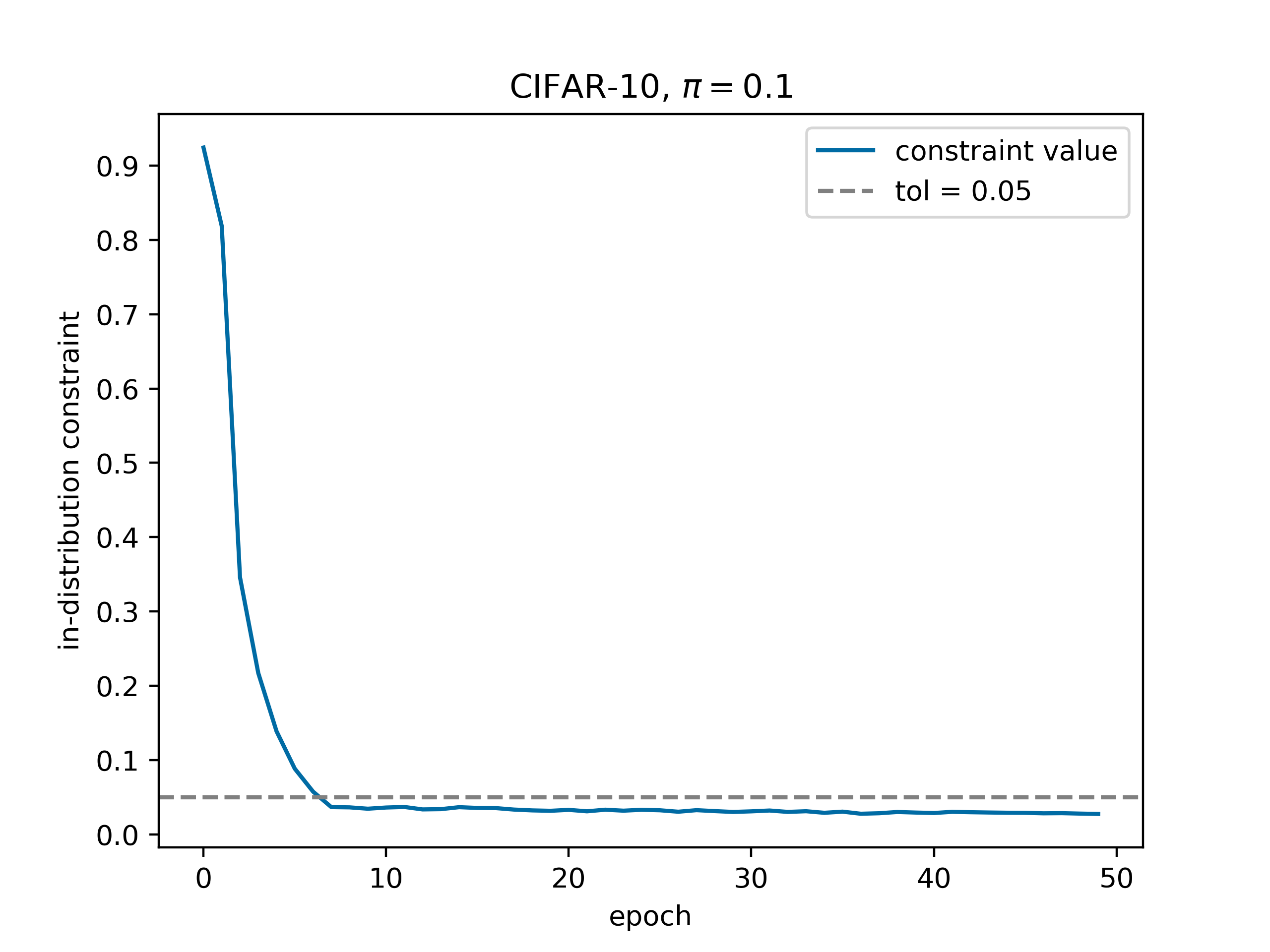}
    \caption{ \textbf{Left and middle}: Ablation on $\pi$ for the OOD setting, using \texttt{CIFAR-10} (left) and \texttt{CIFAR-100} (middle) as the ID dataset and \texttt{Places365} as the OOD dataset. Our method \texttt{WOODS} is more reliable as $\pi$ decreases.  \textbf{Right}: Value of the ID constraint term $\frac{1}{m} \sum_{j=1}^m \frac{1}{1 + \textrm{exp} (w \cdot E_\theta (\*x_i))}  - \alpha$ from \eqref{eq:energy_robust} over different training epochs. Our method is effective in satisfying this constraint, reducing it to zero (within a tolerance of 0.05).}
    \label{fig:pi}
\end{figure*}

\paragraph{Training Details} For all experiments and methods, we use the Wide ResNet~\cite{zagoruyko_wide_2016} architecture with 40 layers and widen factor of 2. The model is optimized using stochastic gradient descent with Nesterov momentum \cite{duchi2011adaptive}. We set the weight decay coefficient to be 0.0005, and momentum to be 0.09. Models are initialized with a model pre-trained on the \texttt{CIFAR-10}/\texttt{CIFAR-100} data and trained for 100 epochs in the $\P_\text{out}^\text{test} = \P_\text{out}$ setting and for 50 epochs in the $\P_\text{out}^\text{test} \neq \P_\text{out}$ setting. Our initialization scheme from a pre-trained model naturally suits our setting (\emph{c.f.} Section~\ref{sec:problem_setup}), where an existing classifier in deployment is available. 

The initial learning rate is set to be 0.001 and decayed by a factor of 2 after 50\%, 75\%, and 90\% of the epochs. We use a batch size of 128 and a dropout rate of $0.3$. All training is performed in PyTorch using NVIDIA GeForce RTX 2080 Ti GPUs. For optimization in \texttt{WOODS}, we vary the penalty multiplier $\gamma \in \{1.1, 1.5\}$ and the dual update learning rate $\mu_2 \in \{0.1,1,2\}$. We set $\texttt{tol}=0.05$, $\alpha=0.05$, and set $\tau$ to be twice the loss of the pre-trained model. For validation, we use subsets of the ID data and of the $\P_\text{out}$ data. Further details are included in Appendix~\ref{sec:exper_details} and code is made publicly available online. 

\subsection{Results}
\label{exp_main}

\paragraph{\texttt{WOODS} Achieves Superior Performance} We present results in Table~\ref{tab:cifar100_main}, where \texttt{WOODS} outperforms the state-of-the-art results in the stationary wild setting $\P_\text{out}^\text{test} = \P_\text{out}$. This setting reflects the practical scenario when the OOD data remains stable. Our comparison covers an extensive collection of competitive OOD detection methods. For clarity, we divide the baseline methods into two categories: trained with and without in-the-wild data. For methods using ID data $\Pin$ only, we compare with methods such as \texttt{MSP}~\cite{Kevin}, \texttt{ODIN}~\cite{liang2018enhancing}, \texttt{Mahalanobis}~\cite{lee2018simple}, and \texttt{Energy}~\cite{liu2020energy}, the model is trained with softmax CE loss, same as in Equation~\ref{eq:ce}.  \texttt{GODIN}~\cite{hsu2020generalized} is trained using a DeConf-C loss, which does not involve auxiliary data loss either. We also include the latest development based on self-supervised losses, namely
\texttt{CSI}~\cite{tack2020csi}.\footnote{We do not include standard errors for the methods that only use the ID data. We note the large difference in performance is statistically significant. We also include the pre-trained model in the Github repository.}

Closest to ours are Outlier Exposure (\texttt{OE})~\cite{hendrycks2018deep} and energy-based OOD learning method~\cite{liu2020energy}. These are among the strongest OOD detection baselines, which regularize the classification model by producing lower confidence or higher energy on the auxiliary outlier data. For a fair comparison, all the methods in this group are trained using the same ID and in-the-wild data, under the same mixture ratio $\pi=0.1$. 

We highlight several observations: (1) Methods using wild data $\Pwild$, in general, show strong OOD detection performance over the counterpart (without $\Pwild$). Compared to the strongest baseline \texttt{Mahalanobis} in the first group, our method outperforms by \textbf{48.10\%} in FPR95, averaged across all test datasets. The performance gain precisely demonstrates the advantage of our setting, which incorporates in-the-wild data for effective OOD learning. (2) Compared to methods using $\Pwild$, \texttt{WOODS} outperforms \texttt{OE} by \textbf{7.36\%} in FPR95, with particularly strong performance on the \texttt{Textures} and \texttt{Places365} datasets. In particular, \texttt{OE} makes a strong distributional assumption that the auxiliary outlier data does not overlap with the ID data. This assumption is violated when using wild mixture data. Indeed, our framework does not make this assumption, and demonstrates superior performance.
(3) Lastly, the ID accuracy of the model trained with our method is comparable to that using the CE loss alone. Due to space constraints, we provide results on \texttt{CIFAR-10} in the Appendix~\ref{sec:appendix_CIFAR10}, where our method's strong performance holds.

\paragraph{Effect of $\pi$} In Table~\ref{tab:pi} and Figure~\ref{fig:pi}, we ablate the effect under different $\pi$, which modulates the fraction of OOD data in the mixture distribution $\Pwild$. Here we consistently use the \texttt{300k Random Images} as the auxiliary outlier training dataset. Recall our definition in Section~\ref{sec:problem_setup}, a smaller $\pi$ indicates more ID data and less OOD data---this reflects the practical scenario that the majority of test data may remain ID. Note that the table covers both settings of  $\P_\text{out}^\text{test} = \P_\text{out}$ (i.e., the column of 300K Rand Img) and $\P_\text{out}^\text{test} \neq \P_\text{out}$ (other columns). We highlight a few interesting observations: (1) The OOD detection performance for all methods (including \texttt{OE} and energy regularized learning) generally degrades as with decreasing $\pi$. In particular, a smaller $\pi$  translates into a harder learning problem, because $\Pin$ and $\Pwild$ become largely overlapping. For example, on the \texttt{300K Random Images} datset, the FPR95 of \texttt{OE} increases from 44.29\% ($\pi=1.0$) to 75.33\% ($\pi=0.05$). (3) Our method \texttt{WOODS} is overall more robust under small $\pi$ settings than the baselines. In a challenging case with $\pi=0.05$, our method outperforms \texttt{OE} by \textbf{12.85\%} in FPR95 on \texttt{300K Random Images}. This demonstrates the benefits of \texttt{WOODS} performing constrained optimization.

\paragraph{\texttt{WOODS} Satisfies the Constraints in Optimization} We also perform a sanity check on whether \texttt{WOODS} satisfies the constraints of the optimization objective in \eqref{eq:energy_robust}. As shown in Figure~\ref{fig:pi} (right), the ID constraint value $\frac{1}{m} \sum_{j=1}^m \frac{1}{1 + \textrm{exp} (w \cdot E_\theta (\*x_i))}  - \alpha$ is reduced to zero, within a specified tolerance of 0.05. This indeed verifies the efficacy of our constrained optimization framework. 

\vspace{-0.2cm}
\paragraph{Alternative Loss} We note that our framework could also be compatible with alternative forms of loss function  $\L_\text{ood}$. For example, we also explored another variant of our method, \texttt{WOODS-alt}, which uses a one hidden layer neural network applied to the penultimate layer for the OOD detection task. Section \ref{sec:results_ssnd_setting} for more details of the loss function. The performance is summarized in Table~\ref{tab:cifar100_main}. While \texttt{WOODS-alt} has strong performance in the setting where $\mathbb{P}_\textrm{out} = \mathbb{P}_\textrm{out}^\textrm{test}$, we do not expect it to perform as well as \texttt{WOODS} in the setting studied previously where $\mathbb{P}_\textrm{out} \neq \mathbb{P}_\textrm{out}^\textrm{test}$. \texttt{WOODS} uses the energy score to build its classifier, which already has reasonable performance even without any additional auxiliary dataset. On the other hand, \texttt{WOODS-alt} would do no better than random guessing without an auxiliary dataset. In this way, \texttt{WOODS} has a prior given by the energy score that we believe helps in the $\mathbb{P}_\textrm{out} \neq \mathbb{P}_\textrm{out}^\textrm{test}$ setting. Indeed, we observed this in some experiments.

\section{Related Work}
\paragraph{OOD Detection} OOD detection is an essential topic for safely deploying machine learning models in the open world, attracting much recent interest. 

1) A rich line of methods design scoring functions for detecting samples from outside training categories, such as OpenMax~\cite{openworld}, Maximum Softmax Probability \cite{Kevin}, ODIN score \cite{liang2018enhancing}, Mahalanobis distance-based score~\cite{lee2018simple}, Energy score \cite{liu2020energy, wang2021can, morteza2022provable, sun2021react}, gradient-based score~\cite{huang2021importance}, and non-parametric KNN score~\cite{sun2022knn}. The aforementioned methods utilize ID data only. We show that by leveraging  wild data that exists naturally in the model's habitats, one can in fact build a stronger OOD detector. 

2) Another line of works address the OOD detection problem by training-time regularization \cite{lee2018gan,  bevandic2018discriminative, hendrycks2018deep, malinin2018predictive, liu2020energy, chen2020informative}. For example, models are encouraged to give predictions with lower confidence~\cite{hendrycks2018deep} or higher energies~\cite{liu2020energy,ming2022posterior}. These methods typically require access to auxiliary outlier dataset. To circumvent this, recent work also explored synthesizing virtual outliers~\cite{du2022vos, du2022unknown}. In this work, we instead explore a more realistic setting by training OOD detectors using wild mixtures data containing both ID and OOD data---which can be easily obtainable upon deploying a machine learning classifier. We formulate a novel constrained optimization problem and show how to solve it tractably with modern neural networks. 

\vspace{-0.4cm}
\paragraph{Anomaly Detection} Anomaly detection has received much attention in recent years (e.g., \cite{ruff2018deep, chalapathy2018anomaly, ergen2019unsupervised, perera2019learning, song2017hybrid}). In anomaly detection, a dataset is drawn \emph{i.i.d.} from $\P_\text{in}$ and the goal is to identify whether new data points are anomalous in the sense that they are not realizations from $\P_\text{in}$. In semi-supervised anomaly detection, an additional clean OOD dataset drawn from $\P_\text{out}$ is observed (e.g., \cite{ruff2019deep, daniel2019deep, hendrycks2018deep}). An important difference between anomaly detection and the OOD detection literature is that OOD detection additionally requires learning a classifier for the distribution $\P_{\mathcal{X}\mathcal{Y}}$. We refer the reader to \cite{ruff2021unifying, chalapathy2019deep} for detailed surveys on anomaly detection.

A closely related paper to our work is \cite{blanchard2010semi}, which studies the setting where samples from $\Pin$ and $\P_\text{wild}$ are observed and the goal is to find a $\theta$ minimizing $ \mathbb{E}_{\*x \sim \P_\text{out}}(\d1{\{g_\theta(\*x)=\text{in}\}})$ subject to the constraint that  $\mathbb{E}_{\*x \sim \Pin}(\d1{\{g_\theta(\*x)=\text{out}\}}) \leq \alpha$. This work has several important differences with ours. First, they do not consider the out-of-distribution problem, that is, where the distribution at test time $\P_\text{out}^\text{test}$ differs from $\P_\text{out}$, whereas our energy-based approach \eqref{eq:energy_robust} does. Second, their formulation only considers the task of distinguishing $\P_\text{out}$ and $\P_\text{in}$, not the task of doing classification simultaneously. Third, our work uses and theoretically analyzes the sigmoid loss for OOD detection, a differentiable loss that can be used in deep learning, whereas their work focuses on the computationally intractable 0-1 loss.  Finally, their work is mainly statistical, only implementing their algorithm using a plug-in kernel-density estimator whereas we leverage neural networks using a computational approach based on ALM.

\vspace{-0.2cm}
\paragraph{Constrained Optimization} The augmented Lagrangian method is a popular approach to constrained optimization. It improves over two other related methods: the penalty method and the method of Lagrangian multipliers. While the penalty method suffers from ill-conditioning \cite{nocedal2006numerical}, the method of Lagrangian multipliers is specific to the convex case \cite{rockafellar1973dual}. In this paper, we adapt ALM to our setting from a recent version proposed and analyzed for the case of nonlinear inequality constraints \cite{xu2021first}. There are only a limited number of examples of adapting ALM to modern neural networks (e.g., \cite{sangalli2021constrained} for class imbalance). To the best of our knowledge, our work is the first to explore constrained optimization for OOD detection problem

\vspace{-0.2cm}
\section{Conclusion}
In this paper, we propose a novel framework for OOD detection using wild data. Wild data has significant promise since \emph{(i)} it is abundant, \emph{(ii)} can be collected essentially for free upon deploying an ML system, and \emph{(iii)} is often a much better match to the test-time distribution than data collected offline. At the same time, it is challenging to leverage because it naturally consists of both ID and OOD examples. To overcome this challenge, we propose a framework based on constrained optimization and solve it tractably by adapting the augmented Lagrangian method to deep neural networks. We believe that wild data has the potential to dramatically advance OOD detection in practice, thereby helping to accelerate the deployment of safe and reliable machine learning.  

\section*{Acknowledgments}
We would like to acknowledge Xuefeng Du's help in verifying the baseline performance. We would also like to thank Ahmet Alacaoglu for very useful conversations. 

\bibliography{refs}

\begin{thebibliography}{47}
\providecommand{\natexlab}[1]{#1}
\providecommand{\url}[1]{\texttt{#1}}
\expandafter\ifx\csname urlstyle\endcsname\relax
  \providecommand{\doi}[1]{doi: #1}\else
  \providecommand{\doi}{doi: \begingroup \urlstyle{rm}\Url}\fi

\bibitem[Bendale \& Boult(2015)Bendale and Boult]{openworld}
Bendale, A. and Boult, T.
\newblock Towards open world recognition.
\newblock In \emph{Proceedings of the IEEE conference on computer vision and
  pattern recognition}, pp.\  1893--1902, 2015.

\bibitem[Bevandi{\'c} et~al.(2018)Bevandi{\'c}, Kre{\v{s}}o, Or{\v{s}}i{\'c},
  and {\v{S}}egvi{\'c}]{bevandic2018discriminative}
Bevandi{\'c}, P., Kre{\v{s}}o, I., Or{\v{s}}i{\'c}, M., and {\v{S}}egvi{\'c},
  S.
\newblock Discriminative out-of-distribution detection for semantic
  segmentation.
\newblock \emph{arXiv preprint arXiv:1808.07703}, 2018.

\bibitem[Blanchard et~al.(2010)Blanchard, Lee, and Scott]{blanchard2010semi}
Blanchard, G., Lee, G., and Scott, C.
\newblock Semi-supervised novelty detection.
\newblock \emph{The Journal of Machine Learning Research}, 11:\penalty0
  2973--3009, 2010.

\bibitem[Chalapathy \& Chawla(2019)Chalapathy and Chawla]{chalapathy2019deep}
Chalapathy, R. and Chawla, S.
\newblock Deep learning for anomaly detection: A survey.
\newblock \emph{arXiv preprint arXiv:1901.03407}, 2019.

\bibitem[Chalapathy et~al.(2018)Chalapathy, Menon, and
  Chawla]{chalapathy2018anomaly}
Chalapathy, R., Menon, A.~K., and Chawla, S.
\newblock Anomaly detection using one-class neural networks.
\newblock \emph{arXiv preprint arXiv:1802.06360}, 2018.

\bibitem[Chen et~al.(2021)Chen, Li, Wu, Liang, and Jha]{chen2020informative}
Chen, J., Li, Y., Wu, X., Liang, Y., and Jha, S.
\newblock Atom: Robustifying out-of-distribution detection using outlier
  mining.
\newblock \emph{In Proceedings of European Conference on Machine Learning and
  Principles and Practice of Knowledge Discovery in Databases (ECML PKDD)},
  2021.

\bibitem[Cimpoi et~al.(2014)Cimpoi, Maji, Kokkinos, Mohamed, and
  Vedaldi]{cimpoi_describing_2014}
Cimpoi, M., Maji, S., Kokkinos, I., Mohamed, S., and Vedaldi, a.~A.
\newblock Describing {Textures} in the {Wild}.
\newblock In \emph{Proceedings of the {IEEE} {Conf}. on {Computer} {Vision} and
  {Pattern} {Recognition} ({CVPR})}, 2014.

\bibitem[Daniel et~al.(2019)Daniel, Kurutach, and Tamar]{daniel2019deep}
Daniel, T., Kurutach, T., and Tamar, A.
\newblock Deep variational semi-supervised novelty detection.
\newblock \emph{arXiv preprint arXiv:1911.04971}, 2019.

\bibitem[Du et~al.(2022{\natexlab{a}})Du, Wang, Gozum, and Li]{du2022unknown}
Du, X., Wang, X., Gozum, G., and Li, Y.
\newblock Unknown-aware object detection: Learning what you don’t know from
  videos in the wild.
\newblock In \emph{Proceedings of the IEEE/CVF Conference on Computer Vision
  and Pattern Recognition}, 2022{\natexlab{a}}.

\bibitem[Du et~al.(2022{\natexlab{b}})Du, Wang, Cai, and Li]{du2022vos}
Du, X., Wang, Z., Cai, M., and Li, Y.
\newblock Vos: Learning what you don’t know by virtual outlier synthesis.
\newblock \emph{Proceedings of the International Conference on Learning
  Representations}, 2022{\natexlab{b}}.

\bibitem[Duchi et~al.(2011)Duchi, Hazan, and Singer]{duchi2011adaptive}
Duchi, J., Hazan, E., and Singer, Y.
\newblock Adaptive subgradient methods for online learning and stochastic
  optimization.
\newblock \emph{Journal of machine learning research}, 12\penalty0 (7), 2011.

\bibitem[Ergen \& Kozat(2019)Ergen and Kozat]{ergen2019unsupervised}
Ergen, T. and Kozat, S.~S.
\newblock Unsupervised anomaly detection with lstm neural networks.
\newblock \emph{IEEE transactions on neural networks and learning systems},
  31\penalty0 (8):\penalty0 3127--3141, 2019.

\bibitem[Hendrycks \& Gimpel(2017)Hendrycks and Gimpel]{Kevin}
Hendrycks, D. and Gimpel, K.
\newblock A baseline for detecting misclassified and out-of-distribution
  examples in neural networks.
\newblock \emph{Proceedings of International Conference on Learning
  Representations}, 2017.

\bibitem[Hendrycks et~al.(2019)Hendrycks, Mazeika, and
  Dietterich]{hendrycks2018deep}
Hendrycks, D., Mazeika, M., and Dietterich, T.
\newblock Deep anomaly detection with outlier exposure.
\newblock In \emph{International Conference on Learning Representations}, 2019.

\bibitem[Hestenes(1969)]{hestenes1969multiplier}
Hestenes, M.~R.
\newblock Multiplier and gradient methods.
\newblock \emph{Journal of optimization theory and applications}, 4\penalty0
  (5):\penalty0 303--320, 1969.

\bibitem[Hsu et~al.(2020)Hsu, Shen, Jin, and Kira]{hsu2020generalized}
Hsu, Y.-C., Shen, Y., Jin, H., and Kira, Z.
\newblock Generalized odin: Detecting out-of-distribution image without
  learning from out-of-distribution data.
\newblock In \emph{Proceedings of the IEEE/CVF Conference on Computer Vision
  and Pattern Recognition}, pp.\  10951--10960, 2020.

\bibitem[Huang et~al.(2021)Huang, Geng, and Li]{huang2021importance}
Huang, R., Geng, A., and Li, Y.
\newblock On the importance of gradients for detecting distributional shifts in
  the wild.
\newblock In \emph{Advances in Neural Information Processing Systems}, 2021.

\bibitem[Huber(1964)]{huber1964}
Huber, P.~J.
\newblock Robust estimation of a location parameter.
\newblock \emph{Annals of Mathematical Statistics}, 35:\penalty0 73--101, March
  1964.

\bibitem[Krizhevsky et~al.(2009)Krizhevsky, Hinton, and
  {others}]{krizhevsky_learning_2009}
Krizhevsky, A., Hinton, G., and {others}.
\newblock Learning multiple layers of features from tiny images.
\newblock 2009.
\newblock Publisher: Citeseer.

\bibitem[Lee et~al.(2018{\natexlab{a}})Lee, Lee, Lee, and Shin]{lee2018gan}
Lee, K., Lee, H., Lee, K., and Shin, J.
\newblock Training confidence-calibrated classifiers for detecting
  out-of-distribution samples.
\newblock \emph{International Conference on Learning Representations (ICLR)},
  2018{\natexlab{a}}.

\bibitem[Lee et~al.(2018{\natexlab{b}})Lee, Lee, Lee, and Shin]{lee2018simple}
Lee, K., Lee, K., Lee, H., and Shin, J.
\newblock A simple unified framework for detecting out-of-distribution samples
  and adversarial attacks.
\newblock In \emph{Advances in Neural Information Processing Systems}, pp.\
  7167--7177, 2018{\natexlab{b}}.

\bibitem[Liang et~al.(2018)Liang, Li, and Srikant]{liang2018enhancing}
Liang, S., Li, Y., and Srikant, R.
\newblock Enhancing the reliability of out-of-distribution image detection in
  neural networks.
\newblock In \emph{6th International Conference on Learning Representations,
  ICLR 2018}, 2018.

\bibitem[Liu et~al.(2020)Liu, Wang, Owens, and Li]{liu2020energy}
Liu, W., Wang, X., Owens, J., and Li, Y.
\newblock Energy-based out-of-distribution detection.
\newblock \emph{Advances in Neural Information Processing Systems}, 2020.

\bibitem[Malinin \& Gales(2018)Malinin and Gales]{malinin2018predictive}
Malinin, A. and Gales, M.
\newblock Predictive uncertainty estimation via prior networks.
\newblock \emph{arXiv preprint arXiv:1802.10501}, 2018.

\bibitem[Ming et~al.(2022)Ming, Fan, and Li]{ming2022posterior}
Ming, Y., Fan, Y., and Li, Y.
\newblock Poem: Out-of-distribution detection with posterior sampling.
\newblock In \emph{International Conference on Machine Learning (ICML)}. PMLR,
  2022.

\bibitem[Morteza \& Li(2022)Morteza and Li]{morteza2022provable}
Morteza, P. and Li, Y.
\newblock Provable guarantees for understanding out-of-distribution detection.
\newblock In \emph{Proceedings of the AAAI Conference on Artificial
  Intelligence}, 2022.

\bibitem[Netzer et~al.(2011)Netzer, Wang, Coates, Bissacco, Wu, and
  Ng]{netzer_reading_2011}
Netzer, Y., Wang, T., Coates, A., Bissacco, A., Wu, B., and Ng, A.~Y.
\newblock Reading {Digits} in {Natural} {Images} with {Unsupervised} {Feature}
  {Learning}.
\newblock In \emph{{NIPS} {Workshop} on {Deep} {Learning} and {Unsupervised}
  {Feature} {Learning} 2011}, 2011.

\bibitem[Nguyen et~al.(2015)Nguyen, Yosinski, and Clune]{fool}
Nguyen, A., Yosinski, J., and Clune, J.
\newblock Deep neural networks are easily fooled: High confidence predictions
  for unrecognizable images.
\newblock In \emph{Proceedings of the IEEE conference on computer vision and
  pattern recognition}, pp.\  427--436, 2015.

\bibitem[Nocedal \& Wright(2006)Nocedal and Wright]{nocedal2006numerical}
Nocedal, J. and Wright, S.
\newblock \emph{Numerical optimization}.
\newblock Springer Science \& Business Media, 2006.

\bibitem[Perera \& Patel(2019)Perera and Patel]{perera2019learning}
Perera, P. and Patel, V.~M.
\newblock Learning deep features for one-class classification.
\newblock \emph{IEEE Transactions on Image Processing}, 28\penalty0
  (11):\penalty0 5450--5463, 2019.

\bibitem[Rockafellar(1973)]{rockafellar1973dual}
Rockafellar, R.~T.
\newblock A dual approach to solving nonlinear programming problems by
  unconstrained optimization.
\newblock \emph{Mathematical programming}, 5\penalty0 (1):\penalty0 354--373,
  1973.

\bibitem[Ruff et~al.(2018)Ruff, Vandermeulen, Goernitz, Deecke, Siddiqui,
  Binder, M{\"u}ller, and Kloft]{ruff2018deep}
Ruff, L., Vandermeulen, R., Goernitz, N., Deecke, L., Siddiqui, S.~A., Binder,
  A., M{\"u}ller, E., and Kloft, M.
\newblock Deep one-class classification.
\newblock In \emph{International conference on machine learning}, pp.\
  4393--4402. PMLR, 2018.

\bibitem[Ruff et~al.(2020)Ruff, Vandermeulen, G{\"o}rnitz, Binder, M{\"u}ller,
  M{\"u}ller, and Kloft]{ruff2019deep}
Ruff, L., Vandermeulen, R.~A., G{\"o}rnitz, N., Binder, A., M{\"u}ller, E.,
  M{\"u}ller, K.-R., and Kloft, M.
\newblock Deep semi-supervised anomaly detection.
\newblock In \emph{International Conference on Learning Representations}, 2020.

\bibitem[Ruff et~al.(2021)Ruff, Kauffmann, Vandermeulen, Montavon, Samek,
  Kloft, Dietterich, and M{\"u}ller]{ruff2021unifying}
Ruff, L., Kauffmann, J.~R., Vandermeulen, R.~A., Montavon, G., Samek, W.,
  Kloft, M., Dietterich, T.~G., and M{\"u}ller, K.-R.
\newblock A unifying review of deep and shallow anomaly detection.
\newblock \emph{Proceedings of the IEEE}, 2021.

\bibitem[Sangalli et~al.(2021)Sangalli, Erdil, H{\"o}tker, Donati, and
  Konukoglu]{sangalli2021constrained}
Sangalli, S., Erdil, E., H{\"o}tker, A., Donati, O., and Konukoglu, E.
\newblock Constrained optimization to train neural networks on critical and
  under-represented classes.
\newblock \emph{Advances in Neural Information Processing Systems}, 34, 2021.

\bibitem[Song et~al.(2017)Song, Jiang, Men, and Yang]{song2017hybrid}
Song, H., Jiang, Z., Men, A., and Yang, B.
\newblock A hybrid semi-supervised anomaly detection model for high-dimensional
  data.
\newblock \emph{Computational intelligence and neuroscience}, 2017, 2017.

\bibitem[Sun et~al.(2021)Sun, Guo, and Li]{sun2021react}
Sun, Y., Guo, C., and Li, Y.
\newblock React: Out-of-distribution detection with rectified activations.
\newblock In \emph{Advances in Neural Information Processing Systems}, 2021.

\bibitem[Sun et~al.(2022)Sun, Ming, Zhu, and Li]{sun2022knn}
Sun, Y., Ming, Y., Zhu, X., and Li, Y.
\newblock Out-of-distribution detection with deep nearest neighbors.
\newblock In \emph{International Conference on Machine Learning (ICML)}. PMLR,
  2022.

\bibitem[Tack et~al.(2020)Tack, Mo, Jeong, and Shin]{tack2020csi}
Tack, J., Mo, S., Jeong, J., and Shin, J.
\newblock Csi: Novelty detection via contrastive learning on distributionally
  shifted instances.
\newblock In \emph{Advances in Neural Information Processing Systems}, 2020.

\bibitem[Wang et~al.(2021)Wang, Liu, Bocchieri, and Li]{wang2021can}
Wang, H., Liu, W., Bocchieri, A., and Li, Y.
\newblock Can multi-label classification networks know what they don’t know?
\newblock \emph{Advances in Neural Information Processing Systems}, 34, 2021.

\bibitem[Xu(2021{\natexlab{a}})]{xu2021first}
Xu, Y.
\newblock First-order methods for constrained convex programming based on
  linearized augmented lagrangian function.
\newblock \emph{Informs Journal on Optimization}, 3\penalty0 (1):\penalty0
  89--117, 2021{\natexlab{a}}.

\bibitem[Xu(2021{\natexlab{b}})]{xu2021iteration}
Xu, Y.
\newblock Iteration complexity of inexact augmented lagrangian methods for
  constrained convex programming.
\newblock \emph{Mathematical Programming}, 185\penalty0 (1):\penalty0 199--244,
  2021{\natexlab{b}}.

\bibitem[Yan \& Xu(2020)Yan and Xu]{yan2020adaptive}
Yan, Y. and Xu, Y.
\newblock Adaptive primal-dual stochastic gradient method for
  expectation-constrained convex stochastic programs.
\newblock \emph{arXiv preprint arXiv:2012.14943}, 2020.

\bibitem[Yu et~al.(2016)Yu, Seff, Zhang, Song, Funkhouser, and
  Xiao]{yu_lsun_2016}
Yu, F., Seff, A., Zhang, Y., Song, S., Funkhouser, T., and Xiao, J.
\newblock {LSUN}: {Construction} of a {Large}-scale {Image} {Dataset} using
  {Deep} {Learning} with {Humans} in the {Loop}.
\newblock \emph{arXiv:1506.03365 [cs]}, June 2016.
\newblock arXiv: 1506.03365.

\bibitem[Zagoruyko \& Komodakis(2016)Zagoruyko and
  Komodakis]{zagoruyko_wide_2016}
Zagoruyko, S. and Komodakis, N.
\newblock Wide {Residual} {Networks}.
\newblock In \emph{Procedings of the {British} {Machine} {Vision}
  {Conference}}, 2016.

\bibitem[Zhang et~al.(2021)Zhang, Bengio, Hardt, Recht, and
  Vinyals]{zhang2021understanding}
Zhang, C., Bengio, S., Hardt, M., Recht, B., and Vinyals, O.
\newblock Understanding deep learning (still) requires rethinking
  generalization.
\newblock \emph{Communications of the ACM}, 64\penalty0 (3):\penalty0 107--115,
  2021.

\bibitem[Zhou et~al.(2018)Zhou, Lapedriza, Khosla, Oliva, and
  Torralba]{zhou_places_2018}
Zhou, B., Lapedriza, A., Khosla, A., Oliva, A., and Torralba, A.
\newblock Places: {A} 10 {Million} {Image} {Database} for {Scene}
  {Recognition}.
\newblock \emph{IEEE Transactions on Pattern Analysis and Machine
  Intelligence}, 40\penalty0 (6):\penalty0 1452--1464, June 2018.

\end{thebibliography}
\bibliographystyle{icml2022}

\newpage
\appendix
\onecolumn

\section{Additional Experimental Results}

\subsection{Main Experiments: \texttt{CIFAR-10}}\label{sec:appendix_CIFAR10}
Table~\ref{tab:cifar_main} shows a comparison of our method's performance with OOD baseline methods on \texttt{CIFAR-10}. On average, \texttt{WOODS} outperforms \texttt{CSI}, the best of the baseline methods, by \textbf{13.59\%}, and \texttt{OE} by \textbf{3.61\%}. Since all of the methods that use $\mathbb{P}_\textrm{wild}$ achieve very good performance on most of the auxiliary test datasets, we highlight in particular the results for \texttt{Textures} (where our method outperforms \texttt{CSI} by \textbf{14.50\%} and \texttt{OE} by \textbf{4.47\%}) and \texttt{Places365} (where our method outperforms \texttt{CSI} by \textbf{22.46\%} and \texttt{OE} by \textbf{10.98\%}). Table~\ref{tab:pi} shows the results of ablation on $\pi$ using \texttt{CIFAR-10} as the ID dataset, where \texttt{WOODS} demonstrates similarly strong performance across the board.

Table~\ref{tab:pi} shows the results of ablation on $\pi$ using \texttt{CIFAR-10} as the ID dataset. In general, \texttt{WOODS} outperforms OE and Energy by an even larger margin than on CIFAR-10 and across many values of $\pi$.

\begin{table*}[h]
\centering
\scalebox{0.64}{
\begin{tabular}{llllllllllllllll} \toprule
\multicolumn{1}{c}{\multirow{3}{*}{\textbf{Method}}} & \multicolumn{10}{c}{\textbf{OOD Dataset}} & \multicolumn{2}{c}{\multirow{2}{*}{\textbf{Average}}} & \multirow{3}{*}{\textbf{Acc.}} \\
\multicolumn{1}{c}{} &
\multicolumn{2}{c}{\textbf{SVHN}} & 
\multicolumn{2}{c}{\textbf{LSUN-R}} & 
\multicolumn{2}{c}{\textbf{LSUN-C}} &
\multicolumn{2}{c}{\textbf{Textures}} & \multicolumn{2}{c}{\textbf{Places365}} & \multicolumn{2}{c}{} & \\
\multicolumn{1}{c}{} & {FPR$\downarrow$} & {AUROC$\uparrow$} & {FPR$\downarrow$} & {AUROC$\uparrow$} & {FPR$\downarrow$} & {AUROC$\uparrow$} & {FPR$\downarrow$} & {AUROC$\uparrow$} & {FPR$\downarrow$} & {AUROC$\uparrow$} & {FPR$\downarrow$} & {AUROC$\uparrow$} & \\ \midrule  &\multicolumn{13}{c}{\textbf{With $\Pin$ only}}          \\
MSP & 48.49 & 91.89 & 52.15 & 91.37 & 30.80 & 95.65 & 59.28 & 88.50 & 59.48 & 88.20 & 50.04 & 91.12  & \textbf{94.84} \\
ODIN & 33.35 & 91.96 & 26.62 & 94.57 & 15.52 & 97.04 & 49.12 & 84.97 & 57.40 & 84.49 & 36.40 & 90.61 & \textbf{94.84} \\
Energy & 35.59 & 90.96 & 27.58 & 94.24 & 8.26 & 98.35 & 52.79 & 85.22 & 40.14 & 89.89 & 32.87 & 91.73 & \textbf{94.84} \\
Mahalanobis & 12.89 & 97.62 & 42.62 & 93.23 & 39.22 & 94.15 & 15.00 & 97.33 & 68.57 & 84.61 & 35.66 & 93.34 & \textbf{94.84}\\ 
GODIN & 13.55 & 97.61 & 17.93 & 96.86 & 17.68 & 96.93 & 29.43 & 94.87 & 41.27 & 91.49 & 23.97 & 95.55 & 94.48\\
CSI &  17.30 & 97.40 & 12.15 & 98.01 & 1.95 & 99.55 & 20.45 & 95.93 & 34.95 & 93.64 & 17.36 & 96.91 & 94.17 \\
\midrule &\multicolumn{13}{c}{\textbf{With $\Pin$ and $\Pwild$}} \\
OE                       &           0.85$^{\pm 0.1}$ &           99.82$^{\pm 0.0}$ &           0.33$^{\pm 0.0}$ &           99.93$^{\pm 0.0}$ &           1.84$^{\pm 0.2}$ &           99.65$^{\pm 0.0}$ &            10.42$^{\pm 0.6}$ &           98.01$^{\pm 0.0}$ &           23.47$^{\pm 0.5}$ &           94.62$^{\pm 0.0}$ & 7.38$^{\pm 0.3}$ & 98.41$^{\pm 0.0}$ &           94.07$^{\pm 0.2}$ \\
Energy (w/ OE)           &           4.95$^{\pm 4.5}$ &           98.92$^{\pm 1.0}$ &           5.04$^{\pm 4.9}$ &           98.83$^{\pm 1.1}$ &           1.93$^{\pm 1.4}$ &           99.49$^{\pm 0.3}$ &           13.43$^{\pm 6.5}$ &           96.69$^{\pm 1.8}$ &           17.26$^{\pm 0.4}$ &           95.84$^{\pm 0.1}$ &      8.52$^{\pm 3.5}$      &        97.95$^{\pm 0.9}$    &           94.81$^{\pm 0.1}$ \\
WOODS (ours)             &           0.15$^{\pm 0.0}$ &  \textbf{99.97}$^{\pm 0.0}$ &           0.03$^{\pm 0.0}$ &           \textbf{99.99}$^{\pm 0.0}$ &           0.22$^{\pm 0.0}$ &           99.94$^{\pm 0.0}$ &           5.95$^{\pm 0.6}$ &           98.79$^{\pm 0.1}$ &           12.49$^{\pm 0.3}$ &  \textbf{97.00}$^{\pm 0.0}$ &  3.77$^{\pm 0.2}$          & \textbf{99.14}$^{\pm 0.0}$  &  \textbf{94.84}$^{\pm 0.1}$ \\
WOODS-alt (ours) &  \textbf{0.10}$^{\pm 0.0}$ &           99.96$^{\pm 0.0}$ &  \textbf{0.02}$^{\pm 0.0}$ &  \textbf{99.99}$^{\pm 0.0}$ &  \textbf{0.08}$^{\pm 0.0}$ &  \textbf{99.96}$^{\pm 0.0}$ &  \textbf{2.11}$^{\pm 0.2}$ &  \textbf{99.32}$^{\pm 0.0}$ &  \textbf{11.51}$^{\pm 0.2}$ &           96.25$^{\pm 0.2}$ &  \textbf{2.76}$^{\pm 0.0}$ &     99.10$^{\pm 0.0}$       &           94.71$^{\pm 0.1}$ \\
\bottomrule
\end{tabular}
}
\caption{\small \textbf{Main results when $\P_\text{out}^\text{test} = \P_\text{out}$.} Comparison with competitive OOD detection methods on \texttt{CIFAR-10}. For methods using $\Pwild$, we train under the same dataset and same $\pi=0.1$. $\uparrow$ indicates larger values are better and vice versa. $\pm x$ denotes the standard error, rounded to the first decimal point.} 
\label{tab:cifar_main}
\end{table*}

\begin{table*}[h]
\centering
\scalebox{0.64}{
\begin{tabular}{llllllllllllllll} \toprule
\multicolumn{1}{c}{\multirow{3}{*}{\textbf{Method}}} & \multicolumn{12}{c}{\textbf{OOD Dataset}} & \multirow{3}{*}{\textbf{Acc.}} \\
\multicolumn{1}{c}{} &
\multicolumn{2}{c}{\textbf{SVHN}} & 
\multicolumn{2}{c}{\textbf{LSUN-R}} & 
\multicolumn{2}{c}{\textbf{LSUN-C}} &
\multicolumn{2}{c}{\textbf{Textures}} & \multicolumn{2}{c}{\textbf{Places365}} & \multicolumn{2}{c}{\textbf{300K Rand. Img.}} & \\
\multicolumn{1}{c}{} & {FPR$\downarrow$} & {AUROC$\uparrow$} & {FPR$\downarrow$} & {AUROC$\uparrow$} & {FPR$\downarrow$} & {AUROC$\uparrow$} & {FPR$\downarrow$} & {AUROC$\uparrow$} & {FPR$\downarrow$} & {AUROC$\uparrow$} & {FPR$\downarrow$} & {AUROC$\uparrow$} & \\ \midrule &\multicolumn{13}{c}{{$\pi=0.05$}}          \\
OE             &          17.23$^{\pm 1.7}$ &           96.66$^{\pm 0.3}$ &          18.31$^{\pm 1.0}$ &           96.80$^{\pm 0.2}$ &           7.38$^{\pm 0.4}$ &           98.59$^{\pm 0.1}$ &          29.47$^{\pm 1.1}$ &           93.63$^{\pm 0.3}$ &           34.47$^{\pm 0.4}$ &           92.44$^{\pm 0.2}$ &     33.69$^{\pm 0.3}$ &           91.89$^{\pm 0.2}$ &           94.38$^{\pm 0.1}$ \\
Energy (w/ OE) &          14.19$^{\pm 2.0}$ &           97.04$^{\pm 0.5}$ &          12.54$^{\pm 0.1}$ &           97.47$^{\pm 0.0}$ &           3.76$^{\pm 0.2}$ &           99.16$^{\pm 0.0}$ &          29.44$^{\pm 1.6}$ &           92.73$^{\pm 0.5}$ &           27.42$^{\pm 0.3}$ &           93.21$^{\pm 0.1}$ &            29.34$^{\pm 1.2}$ &           92.23$^{\pm 0.5}$ &           94.71$^{\pm 0.1}$ \\
WOODS (ours)   &  \textbf{9.23}$^{\pm 1.0}$ &  \textbf{97.98}$^{\pm 0.2}$ &  \textbf{6.89}$^{\pm 0.6}$ &  \textbf{98.53}$^{\pm 0.1}$ &  \textbf{1.82}$^{\pm 0.1}$ &  \textbf{99.56}$^{\pm 0.0}$ &  \textbf{23.12}$^{\pm 1.5}$ &  \textbf{94.98}$^{\pm 0.3}$ &  \textbf{21.58}$^{\pm 0.2}$ &  \textbf{95.07}$^{\pm 0.1}$ &   \textbf{24.39}$^{\pm 0.9}$ &  \textbf{94.16}$^{\pm 0.4}$  & \textbf{94.83}$^{\pm 0.1}$ \\
\midrule &\multicolumn{13}{c}{$\pi=0.1$}          \\
OE             &          13.18$^{\pm 1.3}$ &           97.34$^{\pm 0.2}$ &          14.32$^{\pm 0.4}$ &           97.44$^{\pm 0.1}$ &           5.87$^{\pm 0.2}$ &           98.86$^{\pm 0.0}$ &          25.69$^{\pm 0.6}$ &           94.35$^{\pm 0.1}$ &           30.54$^{\pm 0.3}$ &           93.31$^{\pm 0.1}$ &           30.69$^{\pm 0.7}$ &           92.80$^{\pm 0.2}$ &           94.21$^{\pm 0.1}$ \\
Energy (w/ OE) &           8.52$^{\pm 1.8}$ &           98.13$^{\pm 0.3}$ &           9.05$^{\pm 0.7}$ &           98.13$^{\pm 0.1}$ &           2.78$^{\pm 0.2}$ &           99.38$^{\pm 0.0}$ &          22.32$^{\pm 1.3}$ &           94.72$^{\pm 0.4}$ &           23.74$^{\pm 0.3}$ &           94.26$^{\pm 0.1}$ &           24.59$^{\pm 0.8}$ &           93.99$^{\pm 0.2}$ &           94.54$^{\pm 0.1}$ \\
WOODS (ours)   &  \textbf{5.70}$^{\pm 1.0}$ &  \textbf{98.54}$^{\pm 0.2}$ &  \textbf{4.13}$^{\pm 0.4}$ &  \textbf{99.01}$^{\pm 0.1}$ &  \textbf{1.31}$^{\pm 0.1}$ &  \textbf{99.66}$^{\pm 0.0}$ &  \textbf{17.92}$^{\pm 1.3}$ &  \textbf{96.43}$^{\pm 0.2}$ &  \textbf{19.14}$^{\pm 0.3}$ &  \textbf{95.74}$^{\pm 0.0}$ &  \textbf{19.82}$^{\pm 0.5}$ &  \textbf{95.52}$^{\pm 0.1}$ &  \textbf{94.74}$^{\pm 0.0}$ \\
\midrule &\multicolumn{13}{c}{$\pi=0.2$}          \\
OE             &           9.94$^{\pm 1.0}$ &           98.04$^{\pm 0.1}$ &          10.37$^{\pm 0.5}$ &           98.10$^{\pm 0.1}$ &           4.82$^{\pm 0.3}$ &           99.08$^{\pm 0.0}$ &          20.34$^{\pm 0.7}$ &           95.67$^{\pm 0.2}$ &           27.08$^{\pm 0.5}$ &           93.98$^{\pm 0.2}$ &           24.94$^{\pm 0.8}$ &           94.25$^{\pm 0.1}$ &           94.10$^{\pm 0.0}$ \\
Energy (w/ OE) &           5.88$^{\pm 1.1}$ &           98.50$^{\pm 0.2}$ &           6.18$^{\pm 0.7}$ &           98.65$^{\pm 0.1}$ &           2.38$^{\pm 0.1}$ &           99.45$^{\pm 0.0}$ &           17.28$^{\pm 0.8}$ &           96.14$^{\pm 0.3}$ &           21.28$^{\pm 0.2}$ &           94.93$^{\pm 0.1}$ &            19.54$^{\pm 0.6}$ &           94.99$^{\pm 0.1}$ &           94.30$^{\pm 0.1}$ \\
WOODS (ours)   &  \textbf{4.20}$^{\pm 0.8}$ &  \textbf{98.68}$^{\pm 0.2}$ &  \textbf{3.35}$^{\pm 0.5}$ &  \textbf{99.19}$^{\pm 0.1}$ &  \textbf{1.31}$^{\pm 0.1}$ &  \textbf{99.64}$^{\pm 0.0}$ &  \textbf{13.17}$^{\pm 0.7}$ &  \textbf{97.45}$^{\pm 0.1}$ &  \textbf{17.58}$^{\pm 0.3}$ &  \textbf{96.12}$^{\pm 0.0}$ &   \textbf{15.98}$^{\pm 0.4}$ &  \textbf{96.29}$^{\pm 0.1}$ &  \textbf{94.75}$^{\pm 0.1}$ \\
\midrule &\multicolumn{13}{c}{$\pi=0.5$}          \\
OE             &           5.07$^{\pm 1.1}$ &           98.75$^{\pm 0.1}$ &           5.23$^{\pm 0.8}$ &           98.88$^{\pm 0.1}$ &           2.94$^{\pm 0.2}$ &           99.36$^{\pm 0.0}$ &           12.06$^{\pm 0.7}$ &           97.45$^{\pm 0.2}$ &           19.62$^{\pm 0.2}$ &           95.47$^{\pm 0.1}$ &            16.32$^{\pm 0.6}$ &           95.90$^{\pm 0.0}$ &           94.16$^{\pm 0.1}$ \\
Energy (w/ OE) &  \textbf{2.57}$^{\pm 0.4}$ &  \textbf{99.16}$^{\pm 0.1}$ &           4.64$^{\pm 0.7}$ &           98.94$^{\pm 0.1}$ &           1.57$^{\pm 0.1}$ &  \textbf{99.60}$^{\pm 0.0}$ &       11.77$^{\pm 0.9}$ &           97.50$^{\pm 0.2}$ &           16.63$^{\pm 0.3}$ &           96.04$^{\pm 0.0}$ &          13.86$^{\pm 0.1}$ &           96.34$^{\pm 0.1}$ &           94.57$^{\pm 0.1}$ \\
WOODS (ours)   &           3.58$^{\pm 1.4}$ &           98.69$^{\pm 0.2}$ &  \textbf{2.78}$^{\pm 0.2}$ &  \textbf{99.21}$^{\pm 0.0}$ &  \textbf{1.44}$^{\pm 0.1}$ &           99.56$^{\pm 0.0}$ &  \textbf{9.20}$^{\pm 0.3}$ &  \textbf{98.09}$^{\pm 0.1}$ &  \textbf{15.25}$^{\pm 0.1}$ &  \textbf{96.47}$^{\pm 0.0}$ &   \textbf{11.82}$^{\pm 0.1}$ &  \textbf{96.84}$^{\pm 0.1}$ &  \textbf{94.72}$^{\pm 0.0}$ \\
\midrule &\multicolumn{13}{c}{$\pi=1.0$}          \\
OE             &  \textbf{1.72}$^{\pm 0.1}$ &  \textbf{99.21}$^{\pm 0.1}$ &           2.81$^{\pm 0.2}$ &  \textbf{99.20}$^{\pm 0.0}$ &           1.72$^{\pm 0.1}$ &           99.50$^{\pm 0.0}$ &           6.70$^{\pm 0.3}$ &  \textbf{98.51}$^{\pm 0.0}$ &           12.43$^{\pm 0.1}$ &           96.98$^{\pm 0.0}$ &  10.43$^{\pm 0.4}$ &           97.32$^{\pm 0.0}$ &           94.62$^{\pm 0.1}$ \\
Energy (w/ OE) &           8.20$^{\pm 2.4}$ &           97.85$^{\pm 0.4}$ &           2.72$^{\pm 0.2}$ &           99.05$^{\pm 0.1}$ &           1.62$^{\pm 0.1}$ &           99.36$^{\pm 0.0}$ &           \textbf{5.77}$^{\pm 0.2}$ &           98.50$^{\pm 0.1}$ &  \textbf{11.29}$^{\pm 0.2}$ &  \textbf{97.09}$^{\pm 0.0}$ &      \textbf{9.82}$^{\pm 0.1}$ &  \textbf{97.33}$^{\pm 0.0}$ &           94.23$^{\pm 0.1}$ \\
WOODS (ours)   &           2.35$^{\pm 0.5}$ &           98.42$^{\pm 0.2}$ &  \textbf{2.27}$^{\pm 0.2}$ &           99.14$^{\pm 0.1}$ &  \textbf{1.34}$^{\pm 0.0}$ &  \textbf{99.50}$^{\pm 0.0}$ &  6.31$^{\pm 0.4}$ &           98.42$^{\pm 0.1}$ &           12.50$^{\pm 0.2}$ &           96.78$^{\pm 0.0}$ &   10.01$^{\pm 0.3}$ &           97.11$^{\pm 0.1}$ &  \textbf{94.85}$^{\pm 0.0}$ \\
 \bottomrule
\end{tabular}
}
\caption{\small \textbf{Effect of $\pi$.} A larger $\pi$ indicates more OOD data in the mixture distribution $\Pwild$. ID dataset is \texttt{CIFAR-10}, and the auxiliary outlier training data is \texttt{300K Random Images}.  $\uparrow$ indicates larger values are better and vice versa. $\pm x$ denotes the standard error, rounded to the first decimal point.}
\label{tab:pi}
\end{table*}

\subsection{\texttt{WOODS-alt} Description}
\label{sec:results_ssnd_setting}

Here, we describe \texttt{WOODS-alt}, for which an OOD confidence score is not extracted directly from the output of the ID classifier, but rather learned by a separate neural network attached to the ID classifier's penultimate layer. The additional neural network has one fully-connected hidden layer with 300 neurons, followed by a ReLU activation and a single output logit, which provides an OOD confidence score, denoted $g_\theta(\cdot)$. With this architecture, we apply the same ALM algorithm to solve the following constrained optimization problem:
\begin{align*}
 \inf_{\theta} & \frac{1}{m}\sum_{i=1}^{m} \max(1- g_\theta(\tilde{\*x}_i)), 0)\\
         \text{s.t. } & \frac{1}{n}\sum_{j=1}^n \max(1+ g_\theta(\*x_j)), 0)   \leq \alpha \nonumber \\
        & \frac{1}{n} \sum_{j=1}^n \L_\text{cls}( f_{\theta}(\*x_j) , y_j) \leq \tau. \nonumber
\end{align*}

\subsection{Additional Experimental Details}\label{sec:exper_details}

Here we give additional experimental details, presented in Sections \ref{experiments} and \ref{sec:appendix_CIFAR10}. \texttt{Energy} and \texttt{OE} both optimize an objective of the form
\begin{align*}
\min_\theta \mathcal{L}_\text{cls} + \lambda \mathcal{L}_\text{OOD}
\end{align*}
For energy, we varied $\lambda \in \{0.1,1,5\}$ and for OE we varied $\lambda \in \{0.1,0.5,1\}$. 

We simulate the mixture distribution as follows. For each iteration at training, for the ID dataset we draw one batch of size $128$ and for the wild dataset $\P_\text{wild}$ we draw another batch of size $128$ where each example is drawn from $\P_\text{out}$ with probability $\pi$ and from $\P_\text{in}$ with probability $1-\pi$.

We split the data as follows. For the stationary setting $\P_\text{out}^\text{test} = \P_\text{out}$ (say from \texttt{SVHN}), we use $70\%$ of the \texttt{SVHN} for the mixture training dataset and for the validation dataset . We use the remaining examples for the test set. Of the data for training/validation, we use $30\%$ for validation and the remaining for training. In the nonstationary setting $\P_\text{out}^\text{test} \neq \P_\text{out}$, we use the same splitting approach except in the initial split into train/validation data and test, we use $99\%$ of the \texttt{300K Random Images} dataset (due to its large size). For the ID data, we use $50\%$ for in-distribution training samples and $50\%$ for generating the mixture. 
 
We repeat each experiment 5 times with 5 separate seeds.

\section{Proof of Proposition \ref{prop:sigmoid_loss}}\label{sec:prop_proof}

In this Section, we prove Proposition \ref{prop:sigmoid_loss}. We begin by giving notation and proving an important Lemma.  To ease notation, we write $\P_\text{out} =: \P_1$ and $\P_\text{in} =: \P_0$. Define the sigmoid loss for the OOD task
\begin{align*}
    R_y(g_\theta) := \mathbb{E}_{\*x \sim \P_\text{out}}(\sigma(-g_\theta(\*x)) \cdot y).
\end{align*}
Define 
\begin{align*}
    R^*_1 & := \inf_\theta R_1(g_\theta) \\
    & \text{ s.t. } R_{0}(g_\theta) \leq \alpha.
\end{align*}
Define
\begin{align*}
    R_\text{wild} & := \mathbb{E}_{\*x \sim \Pwild}(\sigma(-g_\theta(\*x)) \\
    & = \pi R_1(g_\theta) + (1-\pi) \mathbb{E}_{\*x \sim \P_{0}}(\sigma(-g_\theta(\*x)) \\
    & = \pi R_1(g_\theta) + (1-\pi) (1-R_{0}(g_\theta))
\end{align*}
where we used the symmetry of the sigmoid function, that is, $\sigma(z) + \sigma(-z) = 1$ for $z \in \R$. Now, define
\begin{align*}
    R^*_\text{wild} := & \inf_\theta R_\text{wild} (g_\theta) \\
    & \text{ s.t. } R_{0}(g_\theta) \leq \alpha. 
\end{align*}

Next, we prove a key Lemma for our proof. This Lemma has a similar proof to Theorem 1 in \cite{blanchard2010semi}, which applies to the $0/1$ loss. We establish an analogous result for the sigmoid loss. The key observation is that the symmetry property of the sigmoid loss enables a similar proof. 

\begin{lemma}\label{lem:ssnd}
Suppose that there exists $\theta^*$ such that $R_1(g_\theta) = R^*_1(g_{\theta^*})$ and $R_0(g_{\theta^*}) = \alpha$. Then, 
\begin{align*}
    R_1(g_\theta) - R_1^* \leq \frac{1}{\pi}(R_\text{wild}(g_\theta) - R_1^* + (1-\pi)(R_0(g_\theta) - \alpha)).
\end{align*}
\end{lemma}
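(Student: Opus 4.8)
The plan is to exploit the additive decomposition
$$R_\text{wild}(g_\theta) = \pi R_1(g_\theta) + (1-\pi)\bigl(1 - R_{-1}(g_\theta)\bigr),$$
already obtained in the preamble from the symmetry $\sigma(z)+\sigma(-z)=1$ (here I read the quantity written $R_0$ in the statement as $R_{-1}$, with $\P_{-1}=\Pin$). The relation is linear in the three risks, so I would simply solve it for the OOD risk,
$$R_1(g_\theta) = \frac{1}{\pi}\Bigl(R_\text{wild}(g_\theta) - (1-\pi)\bigl(1 - R_{-1}(g_\theta)\bigr)\Bigr),$$
which is valid for every $\theta$ because $\pi>0$. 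The whole lemma is then a matter of subtracting this identity evaluated at the optimal classifier $\theta^*$ from the same identity at a generic $\theta$.

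Concretely, I would first instantiate the identity at $\theta^*$. By hypothesis $\theta^*$ attains the constrained OOD optimum, $R_1(g_{\theta^*}) = R_1^*$, and saturates the ID constraint, $R_{-1}(g_{\theta^*}) = \alpha$; substituting gives $R_1^* = \tfrac1\pi\bigl(R_\text{wild}(g_{\theta^*}) - (1-\pi)(1-\alpha)\bigr)$, equivalently $R_\text{wild}(g_{\theta^*}) = \pi R_1^* + (1-\pi)(1-\alpha)$. Subtracting the $\theta^*$ identity from the generic one and cancelling the constant $(1-\pi)$ terms yields the exact relation
$$R_1(g_\theta) - R_1^* = \frac{1}{\pi}\Bigl(R_\text{wild}(g_\theta) - R_\text{wild}(g_{\theta^*}) + (1-\pi)\bigl(R_{-1}(g_\theta) - \alpha\bigr)\Bigr).$$
From the displayed value of $R_\text{wild}(g_{\theta^*})$ one has $R_\text{wild}(g_{\theta^*}) \ge R_1^*$ precisely when $R_1^* \le 1-\alpha$; replacing $R_\text{wild}(g_{\theta^*})$ by the smaller $R_1^*$ only enlarges the right-hand side and produces the claimed inequality.

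The routine algebra (linear rearrangement and the $1/\pi$ bookkeeping) is not where the content lies; the main obstacle is the role of the existence hypothesis on $\theta^*$. The decomposition shows $R_\text{wild}(g_\theta) \ge \pi R_1^* + (1-\pi)(1-\alpha)$ for every feasible $\theta$, since $R_1(g_\theta)\ge R_1^*$ and $R_{-1}(g_\theta)\le\alpha$ forces $1-R_{-1}(g_\theta)\ge 1-\alpha$; but these two minimizations could a priori be attained at different parameters, so this lower bound need not be tight. The assumption that a single $\theta^*$ simultaneously minimizes $R_1$ and saturates the ID constraint is exactly what makes the bound attained, so that $R_\text{wild}(g_{\theta^*})$ equals the feasible minimum $R_\text{wild}^*$ and the subtraction above is legitimate. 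I would therefore foreground that hypothesis, and separately verify the mild condition $R_1^* \le 1-\alpha$ needed for the final substitution of $R_1^*$ for $R_\text{wild}(g_{\theta^*})$ — this holds, for instance, whenever the class contains a (near-)constant classifier at the constraint boundary, which already yields OOD risk $1-\alpha$ while meeting $R_{-1}=\alpha$.
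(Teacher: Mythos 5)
Your proof is correct and follows essentially the same route as the paper's: both exploit the affine identity $R_\text{wild}(g_\theta) = \pi R_1(g_\theta) + (1-\pi)(1-R_{-1}(g_\theta))$, evaluate it at the $\theta^*$ supplied by the hypothesis to obtain $R_\text{wild}(g_{\theta^*}) = \pi R_1^* + (1-\pi)(1-\alpha)$, and subtract; the paper merely reaches that identity more circuitously, via an ``if and only if'' characterization of the minimizers of $R_\text{wild}$ over the feasible set. One point in your favor: the final replacement of $R_\text{wild}(g_{\theta^*})$ by $R_1^*$ on the right-hand side requires $R_\text{wild}(g_{\theta^*}) \geq R_1^*$, equivalently $R_1^* \leq 1-\alpha$, a step the paper performs silently, whereas you correctly isolate this condition and indicate when it holds.
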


\begin{proof}
We begin by showing that for any $\theta$, $R_\text{wild}(g_\theta) = R^*_\text{wild}$ and $R_0(g_\theta) \leq \alpha$ if and only if $R_1(g_\theta) = R^*_1$ and $R_0(g_\theta) = \alpha$. 

$\Longrightarrow$: First, suppose $\theta$ satisfies $R_\text{wild}(g_\theta) = R^*_\text{wild}$ and $R_0(g_\theta) \leq \alpha$. Suppose that either $R_0(g_\theta) < \alpha$ or $R_1(g_\theta) > R_1^*$. By the assumption in the Proposition, there exists $\theta^*$ such that $R_1(g_\theta) = R^*_1(g_{\theta^*})$ and $R_0(g_{\theta^*}) = \alpha$. Then, we have that
\begin{align*}
    R^*_\text{wild}(g_{\theta^*}) & = \pi R_1(g_{\theta^*}) + (1-\pi) (1 - R_0(g_{\theta^*}) \\
    & + \pi R_1(g_{\theta^*}) + (1-\pi) (1 - \alpha) \\
    & < \pi R_1(g_{\theta}) + (1-\pi) (1 - R_0(g_{\theta}) \\
    & = R_\text{wild}(g_\theta).
\end{align*}
But, this contradicts the assumption that $R_\text{wild}(g_\theta) = R^*_\text{wild}$, completing this direction of the claim.

$\Longleftarrow$: Suppose $\theta$ satisfies $R_1(g_\theta) = R^*_1$ and $R_0(g_\theta) = \alpha$. By the assumption in the Lemma, there exists $\theta^*$ such that $R_1(g_\theta) = R^*_1(g_{\theta^*})$ and $R_0(g_{\theta^*}) = \alpha$. Towards a contradiction, suppose that $R_\text{wild}(g_{\theta^*}) < R_\text{wild}(g_{\theta})$. Then, using $\pi > 0$, we have that
\begin{align*}
    R_1(g_{\theta^*}) & = \frac{1}{\pi}(R_\text{wild}(g_{\theta^*}) - (1-\pi)(1-R_0(g_{\theta^*})) \\
    & < \frac{1}{\pi}(R_\text{wild}(g_{\theta}) - (1-\pi)(1-\alpha) \\
    & = R_1(g_\theta)
\end{align*}
but this contradicts our assumption on $g_\theta$. This completes the proof of the claim.

The established claim implies that $R^*_\text{wild} = \pi R^*_1 +(1-\pi)(1-\alpha)$. The result now follows by subtracting this equality from $R^*_\text{wild}(g_\theta) = \pi R_1(g_\theta) + (1-\pi) (1-R_0(g_\theta)) $. 

\end{proof}

Here we restate Proposition \ref{prop:sigmoid_loss} with all technical details included. Let $\Theta \subset \R^p$ where we have that $\theta \in \Theta$. Recall that for the purposes of this proposition we assume that $f_\theta(x) : \R^d \mapsto [0,1]$ and define $\L_\text{cls}(s, y)= y \log(\frac{1}{s}) + (1-y) \log(\frac{1}{1-s})$ for $s \in [0,1]$.  Define
\begin{align}
     \widehat{\theta}_\epsilon \longleftarrow \argmin_{\theta \in \Theta} & \frac{1}{m} \sum_{i=1}^{m} \sigma (-g_\theta(\tilde{\*x}_i)) \label{eq:theory_opt_prob} \\ 
    \text{s.t.} & \frac{1}{n} \sum_{j=1}^n \sigma(g_\theta({\*x}_j) \leq \alpha + \epsilon \nonumber \\
    &  \frac{1}{n} \sum_{j=1}^n \L_\text{cls}(f_\theta(\*x_j) , y_j)  \leq \tau + \epsilon \nonumber
\end{align}

Recall the optimization problem of interest:
\begin{align}
    \inf_{\theta \in \Theta} & ~ \mathbb{E}_{\*x \sim \P_\text{out}} \sigma (-g_\theta(\tilde{\*x}_i)) \label{eq:goal_optimization_problem_tractable_supp} \\
    \text{s.t. } &  \mathbb{E}_{\*x \sim \Pin}\sigma (g_\theta({\*x}_j)) \leq \alpha \nonumber \\
    & \mathbb{E}_{(\*x, y) \sim \P_{\mathcal{X}\mathcal{Y}}}[ \L_\text{cls}(f_\theta(\*x) , y)] \leq \tau. \nonumber
\end{align}
and let \texttt{opt} denote its value.

We will make the following mild assumption and describe settings where it holds later.

\begin{assumption}\label{assum:theory}
There exists $\theta^*$ such that $R_1(g_\theta) = R^*_1(g_{\theta^*})$, $R_0(g_{\theta^*}) = \alpha$, and $\mathbb{E}_{(\*x, y) \sim \P_{\mathcal{X}\mathcal{Y}}}[\L_\text{cls}(f_{\theta^*}(\*x) , y)] \leq \tau.$
\end{assumption}

\begin{proposition}\label{prop:sigmoid_loss_theory}
Suppose $K=2$. Suppose Assumption \eqref{assum:theory} holds.
Define $\epsilon_k :=\sqrt{\frac{2 \ln(6/\delta)}{k}} +2\underset{h \in \{f,g\}}{\max} \underset{\P \in \{\P_1, \P_\text{wild}\}}{\max} \underset{{\tilde{\*x}_{1},\ldots, \tilde{\*x}_{m} \sim \P}}{\E} \,\underset{{\eta_1, \ldots, \eta_m}}{\E} \underset{\theta \in \Theta}{\sup} \frac{1}{k} \sum_{i=1}^k \eta_i h_\theta(\tilde{\*x}_{i}) $ 
where $\eta_1,\ldots, \eta_k$ are \emph{i.i.d.} and $\P(\eta_i=1) = \P(\eta_i=-1)=1/2$. Let $\widehat{\theta}_\epsilon$ solve \eqref{eq:theory_opt_prob} with tolerance $\epsilon =c \epsilon_n$ where $c$ is a universal positive constant. Then, with probability at least $1-\delta$
\begin{enumerate}
\item $\E_\text{out}\sigma(-g_{\widehat{\theta}_\epsilon}(\*x)) \leq \texttt{opt} + c_1 \pi^{-1}(\epsilon_n + \epsilon_m)$,
    \item $\E_\text{in}\sigma(g_{\widehat{\theta}_\epsilon}(\*x))  \leq \alpha + c_2 \epsilon_n$, and
    \item $\E_{\mathcal{X}\mathcal{Y}} [\L_\text{cls}(f_{\widehat{\theta}_\epsilon}(\*x) , y) ]  \leq \tau + c_3 \epsilon_n $.
\end{enumerate}
\end{proposition}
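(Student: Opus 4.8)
The plan is to reduce the proof to two ingredients: \emph{(a)} uniform deviation bounds between the empirical and population versions of the three functionals appearing in the program, each controlled by the Rademacher complexities bundled into $\epsilon_n$ and $\epsilon_m$; and \emph{(b)} Lemma~\ref{lem:ssnd}, which converts near-optimality in the \emph{wild} objective (the quantity the algorithm actually drives down) into near-optimality in the \emph{out} objective (the quantity of interest), at the price of a $\pi^{-1}$ factor. Conclusions 2 and 3 will fall out of (a) essentially directly, while conclusion 1 will combine (a) with the optimality of $\widehat{\theta}_\epsilon$ and Lemma~\ref{lem:ssnd}.

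First I would establish the uniform convergence statement. For each of the three empirical averages --- $\frac1m\sum\sigma(-g_\theta(\tilde{\*x}_i))$ against $R_\text{wild}(g_\theta)$, $\frac1n\sum\sigma(g_\theta(\*x_j))$ against the ID functional $R_{-1}(g_\theta)=\E_\text{in}\sigma(g_\theta(\*x))$, and $\frac1n\sum\max(1-f_\theta(\*x_j)y_j,0)$ against its mean --- I would run the standard symmetrization/McDiarmid argument to bound $\sup_\theta|\widehat R-R|$ by the concentration term $\sqrt{2\ln(6/\delta)/k}$ plus twice the expected Rademacher complexity of the relevant loss class. Since $\sigma$ is Lipschitz and the hinge loss is $1$-Lipschitz, Talagrand's contraction inequality lets me pass from the loss class to the base class $\{h_\theta\}$, which is exactly the Rademacher term written inside $\epsilon_k$ (the maxima over $h\in\{f,g\}$ and $\P\in\{\P_1,\P_\text{wild}\}$ precisely cover the classes and sampling distributions that arise). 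A union bound over the $O(1)$ deviation events --- the source of the constant $6$ in the logarithm --- makes all of them hold simultaneously with probability at least $1-\delta$; on this event $|\widehat R-R|\le\epsilon_n$, respectively $\le\epsilon_m$ for the $m$-sample wild average, up to universal constants.

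On this event, Assumption~\ref{assum:theory} supplies a population-feasible $\theta^*$ with $R_{-1}(g_{\theta^*})=\alpha$, $R_1(g_{\theta^*})=R_1^*=\texttt{opt}$, and hinge loss $\le\tau$; uniform convergence then shows $\theta^*$ is \emph{empirically} feasible at tolerance $\epsilon=c\epsilon_n$, so the empirical program \eqref{eq:theory_opt_prob} is solvable and $\widehat{\theta}_\epsilon$ is well defined with empirical wild objective no larger than its value at $\theta^*$. Conclusions 2 and 3 then follow by chaining the tolerance with the deviation bound: $R_{-1}(g_{\widehat{\theta}_\epsilon})\le\frac1n\sum\sigma(g_{\widehat{\theta}_\epsilon}(\*x_j))+\epsilon_n\le\alpha+\epsilon+\epsilon_n=\alpha+(c+1)\epsilon_n$, giving $c_2=c+1$, and identically for the classification constraint giving $c_3$.

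For conclusion 1 I would invoke Lemma~\ref{lem:ssnd}, which (using $R_1^*=\texttt{opt}$ and the identity $R_\text{wild}(g_{\theta^*})=R^*_\text{wild}$ established inside its proof) yields
\[
R_1(g_{\widehat{\theta}_\epsilon})-\texttt{opt}\ \le\ \tfrac1\pi\Big(R_\text{wild}(g_{\widehat{\theta}_\epsilon})-R^*_\text{wild}+(1-\pi)\big(R_{-1}(g_{\widehat{\theta}_\epsilon})-\alpha\big)\Big).
\]
I would bound the wild excess risk by $R_\text{wild}(g_{\widehat{\theta}_\epsilon})\le\widehat R_\text{wild}(g_{\widehat{\theta}_\epsilon})+\epsilon_m\le\widehat R_\text{wild}(g_{\theta^*})+\epsilon_m\le R_\text{wild}(g_{\theta^*})+2\epsilon_m=R^*_\text{wild}+2\epsilon_m$ (optimality of $\widehat{\theta}_\epsilon$ plus two applications of uniform convergence), and bound the second term using the already-established $R_{-1}(g_{\widehat{\theta}_\epsilon})-\alpha\le(c+1)\epsilon_n$; folding $(1-\pi)\le1$ into the constant gives $R_1(g_{\widehat{\theta}_\epsilon})-\texttt{opt}\le c_1\pi^{-1}(\epsilon_n+\epsilon_m)$. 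The main obstacle I anticipate is bookkeeping in the uniform-convergence step rather than any single deep idea: checking that contraction applies cleanly (the hinge loss is unbounded, so I expect one needs $f_\theta(\*x)y$, equivalently $\Theta$, to be bounded --- presumably the ``mild condition'' alluded to in the statement) and tracking universal constants so a single $\epsilon_k$ controls all events at once. The only genuinely non-mechanical point is correctly using the equivalence from the proof of Lemma~\ref{lem:ssnd} to identify $R_\text{wild}(g_{\theta^*})$ with $R^*_\text{wild}$, since that is precisely what lets optimality in the wild objective translate into the $\pi^{-1}$-scaled guarantee on the out objective.
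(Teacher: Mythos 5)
Your proposal follows essentially the same route as the paper's proof: the same three uniform-deviation events (McDiarmid plus symmetrization plus contraction, union-bounded to give the $6/\delta$), the same feasibility-of-$\theta^*$ and optimality-of-$\widehat{\theta}_\epsilon$ telescoping for the wild objective, and the same final invocation of Lemma~\ref{lem:ssnd} to convert the wild excess risk and the constraint slack into the $\pi^{-1}$-scaled bound on $\E_\text{out}\sigma(-g_{\widehat{\theta}_\epsilon}(\*x))$. Your side remarks (boundedness needed for contraction/McDiarmid on the hinge loss, and reading the lemma's conclusion with $R^*_\text{wild}$ rather than $R^*_1$) are consistent with what the paper implicitly assumes.
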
 

\begin{proof}
Define the following events
\begin{align*}
    \Sigma_1 & = \{ |\frac{1}{m} \sum_{i=1}^{m} \sigma (-g_\theta(\tilde{\*x}_i)) - R_\text{wild}(g_\theta)| \leq 2 \E_{\tilde{\*x}_{1},\ldots, \tilde{\*x}_{m} \sim \P_\text{wild}} \E_{\eta_1, \ldots, \eta_m} \sup_{\theta \in \Theta} \frac{1}{m} \sum_{i=1} \eta_i g_\theta(\tilde{\*x}_{i}) + c \sqrt{\frac{2 \ln(6/\delta)}{m}} : \forall \theta \in \Theta \} \\
            \Sigma_2 & = \{ |\frac{1}{n} \sum_{i=1}^{n} \sigma (g_\theta(\*x_i)) - R_0(g_\theta)| \leq 2 \E_{\*x_{1},\ldots, \*x_{n} \sim \P_0} \E_{\eta_1, \ldots, \eta_n} \sup_{\theta \in \Theta} \frac{1}{n} \sum_{i=1} \eta_i g_\theta(\*x_{i}) + c \sqrt{\frac{2 \ln(6/\delta)}{n}} : \forall \theta \in \Theta \} \\
            \Sigma_3 & = \{ |\frac{1}{n} \sum_{i=1}^{n} \L_\text{cls}( f_\theta(\*x_i),y_i) - \E_{\mathcal{X}\mathcal{Y}} [\L_\text{cls}(f_{\theta}(\*x) , y) ]| \leq 2 \E_{\*x_{1},\ldots, \*x_{n} \sim \P_0} \E_{\eta_1, \ldots, \eta_n} \sup_{\theta \in \Theta} \frac{1}{n} \sum_{i=1} \eta_i f_\theta(\*x_{i}) \\
            & + c \sqrt{\frac{2 \ln(6/\delta)}{n}} : \forall \theta \in \Theta \} 
\end{align*}
By Lemma \ref{lem:concentration}, we have that $\P(\Sigma_i) \geq 1- \delta/3$ for all $i=1,2,3$. Then, by the union bound, we have that $\Sigma : = \Sigma_1 \cap \Sigma_2 \cap \Sigma_3$ holds with probability at least $1-\delta$. Assume $\Sigma$ holds for the remainder of the proof. 

Note that we have
\begin{align}
    R_0(g_{\widehat{\theta}_\epsilon}) - R_0(g_{\theta^*}) & = R_0(g_{\widehat{\theta}_\epsilon}) - \frac{1}{n} \sum_{i=1}^{n} \sigma (-g_{\widehat{\theta}_\epsilon}(\*x_i)) \nonumber \\
    & + \frac{1}{n} \sum_{i=1}^{n} \sigma (-g_{\widehat{\theta}_\epsilon}(\*x_i)) - \frac{1}{n} \sum_{i=1}^{n} \sigma (-g_{\theta^*}(\*x_i)) \nonumber \\
    & + \frac{1}{n} \sum_{i=1}^{n} \sigma (-g_{\theta^*}(\*x_i)) - R_0(g_{\theta^*}) \nonumber \\
    & \leq R_0(g_{\widehat{\theta}_\epsilon}) - \frac{1}{n} \sum_{i=1}^{n} \sigma (-g_{\widehat{\theta}_\epsilon}(\*x_i)) \nonumber \\
    & + \frac{1}{n} \sum_{i=1}^{n} \sigma (-g_{\theta^*}(\*x_i)) - R_0(g_{\theta^*}) \label{eq:ineq_optimization_prob} \\
    & \leq 4 \E_{\tilde{\*x}_{1},\ldots, \tilde{\*x}_{m} \sim \P_\text{wild}} \E_{\eta_1, \ldots, \eta_m} \sup_{\theta \in \Theta} \frac{1}{m} \sum_{i=1} \sigma_i g_\theta(\tilde{\*x}_{i}) + 2c \sqrt{\frac{2 \ln(2/\delta)}{m}} \label{eq:apply_event_1}.
\end{align}
\eqref{eq:ineq_optimization_prob} follows since $\widehat{\theta}_\epsilon$ is feasible for the optimization problem \eqref{eq:theory_opt_prob} because, by the choice of $\epsilon $ and $\Sigma$, 
\begin{align*}
R_0(g_{\theta^*}) & \leq \alpha \\
\E_{\*x \sim \P_0} \max(1- f_{\theta^*}(\*x)y,0) &  \leq \tau 
\end{align*}
Therefore, by definition of $\widehat{\theta}_\epsilon$ as the minimizer of \eqref{eq:theory_opt_prob}, we have that
\begin{align*}
\frac{1}{n} \sum_{i=1}^{n} \sigma (-g_{\widehat{\theta}_\epsilon}(\*x_i)) - \frac{1}{n} \sum_{i=1}^{n} \sigma (-g_{\theta^*}(\*x_i))\leq 0.
\end{align*}
\eqref{eq:apply_event_1} follows by the event $\Sigma_1$.

Similarly,
\begin{align}
    \E_\text{in}\sigma(g_{\widehat{\theta}_\epsilon}(\*x)) & =  \frac{1}{n} \sum_{i=1}^{n} \sigma (g_\theta(\*x_i)) + \E_\text{in}\sigma(g_{\widehat{\theta}_\epsilon}(\*x)) - \frac{1}{n} \sum_{i=1}^{n} \sigma (g_\theta(\*x_i)) \nonumber \\
    & \leq \alpha + c \epsilon_n + \E_\text{in}\sigma(g_{\widehat{\theta}_\epsilon}(\*x)) - \frac{1}{n} \sum_{i=1}^{n} \sigma (g_\theta(\*x_i))  \label{eq:ineq_optimization_prob_2}\\
    & \leq \alpha + c_2 \epsilon_n \label{eq:apply_event_2}
\end{align}
where \eqref{eq:ineq_optimization_prob_2} follows from the definition of $\widehat{\theta}_\epsilon$ and \eqref{eq:apply_event_1} follows from $\Sigma_2$. This establishes claim 2 in the Proposition.

Claim 3 follows by a similar argument to claim 2. Finally, claim 1 follows by \eqref{eq:apply_event_1}, \eqref{eq:apply_event_2}, and  Lemma \ref{lem:ssnd}.

\end{proof}

\begin{lemma}\label{lem:concentration}
Let $\delta \in (0,1)$. Let $\*x_1,\ldots, \*x_k \sim \P$ and $y_i \sim \text{Bernoulli}(p(\*x_i))$ for $i \in [k]$. Let $\eta_i$ be a Rademacher random variable, i.e., $\P(\eta_i=1) = \P(\eta_i=-1)=1/2$. Then,
\begin{itemize}
\item With probability at least $1-\delta$, for all $\theta \in \Theta$
\begin{align*}
    |\frac{1}{k} \sum_{i=1}^{k} \sigma (-g_\theta(\*x_i)) - \E_{\*x \sim \P} \sigma (-g_\theta(\*x)) | \leq 2 \E_{\*x_{1},\ldots, \*x_{k} \sim \P} \E_{\eta_1, \ldots, \eta_m} \sup_{\theta \in \Theta} \frac{1}{k} \sum_{i=1}^k \eta_i g_\theta(\*x_i) + c \sqrt{\frac{2 \ln(2/\delta)}{k}}
\end{align*}
\item With probability at least $1-\delta$, for all $\theta \in \Theta$
\begin{align*}
    |\frac{1}{k} \sum_{i=1}^{k} \sigma (g_\theta(\*x_i)) - \E_{\*x \sim \P} \sigma (g_\theta(\*x)) | \leq 2 \E_{\*x_{1},\ldots, \*x_{k} \sim \P} \E_{\eta_1, \ldots, \eta_m} \sup_{\theta \in \Theta} \frac{1}{k} \sum_{i=1}^k \eta_i g_\theta(\*x_i) + c \sqrt{\frac{2 \ln(2/\delta)}{k}}
\end{align*}
\item With probability at least $1-\delta$, for all $\theta \in \Theta$
\begin{align*}
    |\frac{1}{n} \sum_{i=1}^{n} \L_\text{cls}( f_\theta(\*x_i),y_i) - \E_{\*x \sim \P, y \sim \text{Bernoulli}(p(\*x_i))} [\L_\text{cls}(f_{\theta}(\*x) , y) ]|  & \leq 2 \E_{\*x_{1},\ldots, \*x_{k} \sim \P} \E_{\eta_1, \ldots, \eta_n} \sup_{\theta \in \Theta} \frac{1}{k} \sum_{i=1} \eta_i f_\theta(\*x_{i}) \\
            & + c \sqrt{\frac{2 \ln(2/\delta)}{k}}.
\end{align*}
\end{itemize}
\end{lemma}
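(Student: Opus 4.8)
The plan is to prove each of the three bounds by the standard template for uniform deviation inequalities: a bounded-differences concentration step followed by symmetrization and contraction. I would treat the three cases in parallel, since they differ only in the loss composed with the base predictor and in the relevant Lipschitz constant.

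First I would fix the first statement and define the uniform deviation $\Phi(\*x_1,\ldots,\*x_k) := \sup_{\theta\in\Theta}\big(\frac1k\sum_{i=1}^k \sigma(-g_\theta(\*x_i)) - \E_{\*x\sim\P}\sigma(-g_\theta(\*x))\big)$. Because $\sigma$ takes values in $[0,1]$, changing a single sample $\*x_i$ perturbs $\Phi$ by at most $1/k$, so McDiarmid's bounded-differences inequality yields $\Phi \le \E\Phi + \sqrt{\ln(2/\delta)/(2k)}$ with probability at least $1-\delta/2$, and the symmetric argument controls the negative tail; a union bound over the two tails recovers the absolute value and accounts for the $c\sqrt{2\ln(2/\delta)/k}$ term.

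Next I would bound $\E\Phi$ by the usual symmetrization argument: introducing a ghost sample and Rademacher signs $\eta_1,\ldots,\eta_k$ gives $\E\Phi \le 2\,\E_{\*x}\E_{\eta}\sup_{\theta\in\Theta}\frac1k\sum_{i=1}^k \eta_i\,\sigma(-g_\theta(\*x_i))$, i.e.\ twice the Rademacher complexity of the composed class $\{\*x\mapsto\sigma(-g_\theta(\*x))\}$. Since $\sigma$ is Lipschitz (constant $1/4$) and $t\mapsto -t$ is an isometry, the Ledoux--Talagrand contraction inequality replaces this composed complexity by the Rademacher complexity of the base class $\{g_\theta\}$, namely $2\,\E_{\*x}\E_\eta\sup_\theta \frac1k\sum_i \eta_i g_\theta(\*x_i)$, the Lipschitz factor being absorbed into the stated factor $2$. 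The second bound is identical after replacing $-g_\theta$ by $g_\theta$ (again an isometry). The third bound follows the same template with the $1$-Lipschitz hinge loss $t\mapsto\max(1-ty,0)$ in place of $\sigma$ and with base class $\{f_\theta\}$: multiplication by $y\in\{\pm1\}$ leaves the Rademacher complexity unchanged because the $\eta_i$ are symmetric, so contraction again reduces to the base-class complexity, with constant $2\cdot 1 = 2$ matching the statement exactly.

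The main obstacle I anticipate is boundedness in the hinge-loss case: McDiarmid requires a bounded per-sample range, which is automatic for the sigmoid but for the hinge loss needs $f_\theta(\*x)$ to be uniformly bounded over $\theta\in\Theta$ and $\*x$ in the support. I would therefore invoke such a boundedness condition on $\Theta$ (or the data), folding the resulting range into the constant $c$; absent a uniform bound one can instead run the bounded-differences step on a truncated loss. The remaining work is purely bookkeeping of constants so that the factor $2$ on the Rademacher term and the universal $c$ come out consistently across all three parts.
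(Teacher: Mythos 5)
Your proposal follows essentially the same route as the paper's proof: McDiarmid's bounded-differences inequality for concentration, symmetrization to introduce the Rademacher average of the composed loss class, and the Ledoux--Talagrand contraction lemma with the Lipschitz property of the sigmoid (resp.\ hinge) loss to reduce to the Rademacher complexity of the base class $\{g_\theta\}$ (resp.\ $\{f_\theta\}$). Your observation that the hinge-loss case additionally requires a uniform bound on $f_\theta(\*x)$ for the bounded-differences step is a fair point that the paper's proof leaves implicit, but it does not change the argument.
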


\begin{proof}
We show the first bullet point. The second and third bullet points follow by a similar argument. Using Mcdiarmid's inequality, we have that with probability at least $1-\delta$ for all $\theta \in \Theta$
\begin{align*}
    |\frac{1}{k} \sum_{i=1}^{k} \sigma (-g_\theta(\*x_i)) - \E_{\*x \sim \P} \sigma (-g_\theta(\*x)) | \leq 2 \E_{\*x_{1},\ldots, \*x_{k} \sim \P} \E_{\eta_1, \ldots, \eta_m} \sup_{\theta \in \Theta} \frac{1}{k} \sum_{i=1}^k \eta_i \sigma(g_\theta(\*x_i)) + c \sqrt{\frac{2 \ln(2/\delta)}{k}}
\end{align*}
Then, using the contraction Lemma and the fact that the sigmoid function $\sigma$ is $1$-Lipschitz, we have that
\begin{align*}
    \E_{\*x_{1},\ldots, \*x_{k} \sim \P} \E_{\eta_1, \ldots, \eta_m} \sup_{\theta \in \Theta} \frac{1}{k} \sum_{i=1}^k \eta_i \sigma(g_\theta(\*x_i)) \leq \E_{\*x_{1},\ldots, \*x_{k} \sim \P} \E_{\eta_1, \ldots, \eta_m} \sup_{\theta \in \Theta} \frac{1}{k} \sum_{i=1}^k \eta_i g_\theta(\*x_i).
\end{align*}
The result follows by combining the above two inequalities.
\end{proof}

As an example where Assumption \ref{assum:theory} holds, consider for instance when $\theta = \begin{pmatrix}
w_1 \\
w_2
\end{pmatrix}$ with $w_1, w_2 \in \R^d$ and $g_\theta(x) = w_1^\t x$ and $f_\theta(x) = w_2^\t x$. Then if $\P_0$ is absolutely continuous wrt the Lebesgue measure, Assumption \ref{assum:theory} holds. We could similar replace the linear maps $g$ and $f$ with neural networks that share a penultimate layer. See \citet{blanchard2010semi} for a more detailed discussion and for more examples.

\section{Validation using $\P_\text{wild}$ data}
\label{sec:validation}
In this Section, we discuss how to use data from $\P_\text{wild}$ for a validation procedure and demonstrate its feasibility. For simplicity, we focus on the OOD task since it is standard to have a clean ID validation set for classification and therefore this captures the main difficulty. To ease notation, we write $\P_\text{out} = \P_1$ and $\P_\text{in} = \P_1$. Here, overloading notation, we assume access to a holdout set from $\P_0$ and the mixture $\P_\text{wild}$:
\begin{itemize}
    \item $\*x_1,\ldots, \*x_n \sim \P_0$
    \item $\tilde{\*x}_{1},\ldots, \tilde{\*x}_{m} \sim \P_\text{wild} := (1-\pi) \P_0 + \pi \P_1$ ($\pi \in (0,1]$ unknown)
\end{itemize}
We suppose that we have access to a small, finite set of models $\G \subset \{ g : \R^d \mapsto \R \}$. $\G$ is typically obtained from training a model with a set of distinct hyperparameters, generating one $g \in \G$ for each hyperparameter configuration. Note that $\G$ is totally generic. As is typical in the OOD literature, we obtain from $\G$ a set of OOD predictors by thresholding each $g \in \G$ as follows:
\begin{align*}
    \H & := \{ \sign (g(x) - \tau ) : g \in \G, \tau \in \R\}.
\end{align*}

We now introduce some notation, overloading notation from Section \ref{sec:prop_proof}. Define
\begin{align*}
    R_y(g_\theta) := \mathbb{E}_{\*x \sim \P_\text{out}}(\d1 
    \{h(\*x)) \neq y\}.
\end{align*}
Define the optimization problem
\begin{align*}
    R^*_1 & := \inf_{h \in \H} R_1(h) \\
    & \text{ s.t.} R_0(h) \leq \alpha.
\end{align*}
Define risk for $\P_\text{wild}$
\begin{align*}
    R_\text{wild} & := \mathbb{E}_{\*x \sim \Pwild}(\d1 
    \{h(\*x)) \neq 1\}) \\
    & = \pi R_1(h) + (1-\pi) (1-R_0(h))
\end{align*}
 Now, define another similar optimization, only changing the objective:
\begin{align*}
    R^*_\text{wild} & := \inf_\theta R_\text{wild} (h) \\
    & \text{ s.t. } R_0(h) \leq \alpha. 
\end{align*}

We choose the $\widehat{h} \in \H$ that minimizes the FNR@95 on the holdout set:
\begin{align*}
    \widehat{h}_\epsilon \in & \argmin_{h \in \H} \frac{1}{m} \sum_{i=1}^m \d1\{h(\tilde{\*x}_i) \neq 1\}  \\
    & \text{ s.t. } \frac{1}{n} \sum_{i=1}^n \d1\{h(\*x_i) \neq -1\}  \leq \alpha + \epsilon
\end{align*}
where we write $\widehat{h} := \widehat{h}_0$. We emphasize that this procedure is not only intuitive; it is also justified theoretically by applying Theorem 2 from \cite{blanchard2010semi}. Theorem 2 requires that the following condition is satisfied:

\begin{assumption}\label{assum:validation}
For any $\alpha \in (0,1)$, there exists $h^* \in \G$ such that $R_0(h^*) = \alpha$ and $R_1(h^*) = R_{1,\alpha}^*(\G)$.
\end{assumption}

Here, we show that $\H$ satisfies Assumption \ref{assum:validation} if $P_0$ is absolutely continuous with respect to the Lebesgue measure. Fix some $g \in \G$ and define $h_\tau(x) := \d 1\{h(x) > \tau\}$. Notice that if $\tau > \tau^\prime$, then
\begin{align*}
    h_\tau(x) \leq  h_{\tau^\prime}(x).
\end{align*}
Thus, as discussed in the Remark of page 2978 in \cite{blanchard2010semi}, we have that if for a given $\tau$ we have that $R_0(h_\tau) < \alpha$, using the absolute continuity of $P_0$, we can find a $\tau^\prime$ such that $R_0(h_{\tau^\prime}) = \alpha$ and $R_1(h_{\tau^\prime}) \leq R_1(h_{\tau})$. Since this holds for any $g \in \G$, this implies that Assumption \ref{assum:validation} holds. Then, as a Corollary from Theorem 2 of \cite{blanchard2010semi}, we obtain
\begin{corollary}
Let $\epsilon_k := \sqrt{\frac{\mathcal{VC}(\H) - \log(\delta)}{k}}$ where $\mathcal{VC}(\H)$ denotes the VC dimension of $\H$. If $\epsilon = c \epsilon_n$, with probability at least $1-\delta$
\begin{enumerate}
    \item $R_{1}(\widehat{h}_\epsilon) \leq R^*_1 + c \pi^{-1}( \epsilon_n + \epsilon_m)$, and
    \item $R_{0}(\widehat{h}_\epsilon) \leq \alpha + c \pi^{-1} \epsilon_n$.
\end{enumerate}
\end{corollary}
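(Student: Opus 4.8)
The plan is to obtain the Corollary as a direct instantiation of Theorem~2 of \cite{blanchard2010semi} for the thresholded family $\H$, so that the work reduces to two pieces: verifying the structural hypothesis that the theorem requires (namely Assumption~\ref{assum:validation}), and supplying the VC-based uniform-deviation rates that populate its conclusion. The conceptual engine is the same mixture identity used in Lemma~\ref{lem:ssnd}, now for the $0/1$ loss: since $R_\text{wild}(h) = \pi R_1(h) + (1-\pi)(1 - R_{-1}(h))$, any control on the wild objective together with the in-distribution constraint translates into control on $R_1$, with the characteristic $\pi^{-1}$ amplification that produces the factor in claim~1.

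First I would verify Assumption~\ref{assum:validation}. Fixing a base score $g \in \G$ and writing $h_\tau := \sign(g(\cdot) - \tau)$, the family $\{h_\tau\}_\tau$ is monotone in $\tau$, so $R_{-1}(h_\tau)$ is monotone; under the absolute continuity of $\P_{-1}$ with respect to Lebesgue measure it is moreover continuous in $\tau$, hence attains every value in its range, in particular exactly $\alpha$. Monotonicity then guarantees that sliding the threshold to the exact-$\alpha$ level does not increase $R_1$. Optimizing over $g \in \G$ produces an $h^* \in \H$ with $R_{-1}(h^*) = \alpha$ and $R_1(h^*) = R^*_1$, which is precisely the hypothesis of Theorem~2; this is the argument already sketched in the surrounding text, following the Remark on p.~2978 of \cite{blanchard2010semi}.

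Next I would record the concentration input. Because $\H$ consists of threshold classifiers built from a finite $\G$, it has finite VC dimension, and standard VC uniform-deviation bounds give, with probability at least $1-\delta$, simultaneous control of the empirical $\P_{-1}$-risk (sample size $n$) and the empirical $\P_\text{wild}$-risk (sample size $m$) by $\epsilon_n$ and $\epsilon_m$ respectively. The slack $\epsilon = c\epsilon_n$ in the empirical constraint is calibrated exactly so that the exact-$\alpha$ classifier $h^*$ from the previous step stays feasible for the empirical program with high probability; this feasibility yields the comparison $\widehat{R}_\text{wild}(\widehat{h}_\epsilon) \le \widehat{R}_\text{wild}(h^*)$, which, after translating empirical to population quantities and invoking the mixture identity, delivers claim~1, while the empirical feasibility of $\widehat{h}_\epsilon$ combined with concentration delivers the in-distribution constraint bound of claim~2 (in the form inherited from Theorem~2).

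The main obstacle is the verification of Assumption~\ref{assum:validation}: everything downstream rests on the population-optimal $h^*$ being achievable \emph{exactly} at level $\alpha$ and on its staying feasible after the empirical relaxation. Should $R_{-1}$ fail to hit $\alpha$ exactly---for instance if $\P_{-1}$ had atoms---then $h^*$ could be infeasible, the comparison $\widehat{R}_\text{wild}(\widehat{h}_\epsilon) \le \widehat{R}_\text{wild}(h^*)$ would break, and the oracle inequality would no longer close. Once this step is secured, the remainder is routine empirical-to-population bookkeeping together with the $\pi^{-1}$ amplification already encapsulated in Theorem~2 of \cite{blanchard2010semi}, so I anticipate no further difficulty.
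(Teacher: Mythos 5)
Your proposal matches the paper's argument: verify Assumption~\ref{assum:validation} for $\H$ via the monotone thresholding construction together with absolute continuity of $\P_{-1}$, then invoke Theorem~2 of \cite{blanchard2010semi} with the VC-based deviation rates $\epsilon_n,\epsilon_m$ to read off both claims. The additional detail you give on the internal mechanics of Theorem~2 (feasibility of the exact-$\alpha$ classifier, the empirical wild-risk comparison, and the $\pi^{-1}$ amplification via the mixture identity) is consistent with, and slightly more explicit than, what the paper records.
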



\end{document}